\documentclass[%
 reprint,
amsmath,amssymb,
aps,
]{revtex4-2}

\usepackage{dcolumn}%
\usepackage{bm}%
\usepackage{microtype}
\usepackage[table]{xcolor}
\usepackage{basic_commands}
\usepackage{graphicx}
\usepackage{subfigure}
\usepackage{booktabs} %

\usepackage{hyperref}

\definecolor{lightpurple}{RGB}{235,225,245}

\usepackage{mathtools}
\usepackage{amsthm}
\usepackage{pifont}
\usepackage{xspace}
\usepackage{braket}\usepackage{caption}

\usepackage{algorithmic}

\newcommand{\xmark}{\ding{55}}
\newcommand{\methodname}{ADA\xspace}

\newcommand{\mb}{\mu_{\text{base}}}
\newcommand{\mt}{\mu_{\theta}}
\newcommand{\Mo}{\mathcal{M}_o(\mathcal{X})}

\usepackage[capitalize,noabbrev]{cleveref}

\theoremstyle{plain}
\newtheorem{theorem}{Theorem}[section]

\theoremstyle{definition}

\theoremstyle{remark}

\usepackage[textsize=tiny]{todonotes}

\makeatletter
\AtBeginDocument{%
  \long\def\@makecaption#1#2{%
    \par\vskip\abovecaptionskip
    \begingroup
      \small
      \raggedright
      #1.\ #2\par
    \endgroup
    \vskip\belowcaptionskip
  }%
}
\makeatother

\begin{document}

\preprint{}

\title{Bridging the Simulation-to-Experiment Gap with Generative Models \\ using Adversarial Distribution Alignment}%

\author{Kai Tyrus Nelson}
\altaffiliation{
    Denotes equal contribution
}
\author{Tobias Kreiman}%
\altaffiliation{
    Denotes equal contribution
}
\author{Sergey Levine}%
\author{Aditi S. Krishnapriyan}%
\altaffiliation{
    Lawrence Berkeley National Laboratory (LBNL) \\
    Corresponding author:  \href{mailto:kai\_nelson@berkeley.edu}{kai\_nelson@berkeley.edu}
}
\affiliation{%
 University of California, Berkeley
}%

\begin{abstract}

A fundamental challenge in science and engineering is the simulation-to-experiment gap. While we often possess prior knowledge of physical laws, these physical laws can be too difficult to solve exactly for complex systems. Such systems are commonly modeled using simulators, which impose computational approximations. Meanwhile, experimental measurements more faithfully represent the real-world, but experimental data typically consists of \textit{observations} that only partially reflect the system’s full underlying state. We propose a data-driven distribution alignment framework that bridges this simulation-to-experiment gap by pre-training a generative model on fully observed (but imperfect) simulation data, then aligning it with partial (but real) observations of experimental data. While our method is domain-agnostic, we ground our approach in the physical sciences by introducing Adversarial Distribution Alignment (\methodname). This method aligns a generative model of atomic positions---initially trained on a simulated Boltzmann distribution---with the distribution of experimental observations.
We prove that our method recovers the target observable distribution, even with multiple, potentially correlated observables. We also empirically validate our framework on synthetic, molecular, and experimental protein data,
demonstrating that it can align generative models with diverse observables. Our code is available at \url{https://kaityrusnelson.com/ada/}.

\end{abstract}

\maketitle

\section{Introduction}

Physical laws precisely describe the behavior of matter, but solving them is often prohibitively expensive when applied to real-world systems. A central challenge in computational science is thus the development of accurate yet tractable models. For example, numerical simulation methods in atomistic modeling trade computational cost for accuracy by introducing various approximations from the classical to the quantum level \citep{Bannwarth2019_gfn2xtb, kohn1965self, ek1966_cc, MacKerell1998_charm, Cornell1995_amber}. Despite their simplifying assumptions, by predicting physical properties \textit{in silico}, accurate computational models can reduce reliance on expensive laboratory experiments.

Meanwhile, direct experimental measurements better reflect the real world, but exact measurements of the full underlying state of a system can be costly or completely inaccessible. Consequently, experimental data often only reflects \textit{partial observations} of the real world. Although latent variable models can be used to reason about partially observed systems \citep{Wainwright2008, Bartholomew2011}, such models can be data-intensive and challenging to train in practice, especially in experimental settings where measurements are expensive. As a result, this leaves a discrepancy between abundant, fully-observed but approximate simulation data and scarce, partially-observed but more accurate experimental measurements: a discrepancy which we refer to as the \emph{simulation-to-experiment gap}.  

\begin{figure}[h]
    \centering
    \includegraphics[width=1\linewidth]{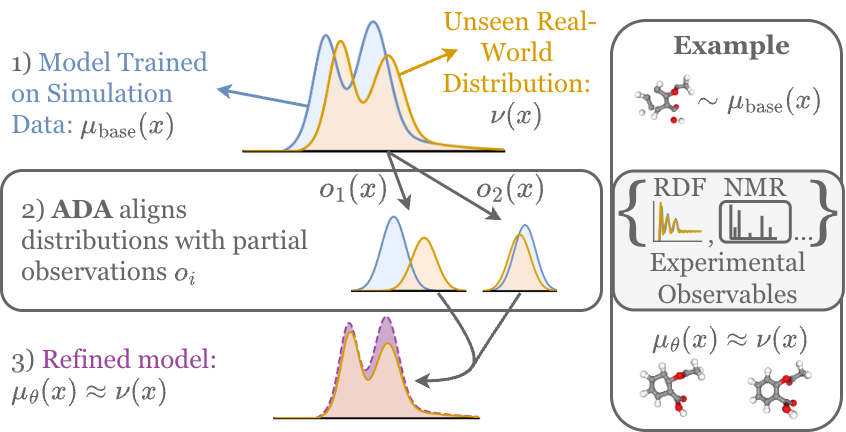}
    \caption{\textbf{\methodname aligns generative models trained on approximate simulation data with the real world by leveraging partial experimental observations.} (1) \methodname starts from a base generative model $\mb(x)$ trained on simulation data, such as molecular dynamics simulations using a classical force field. (2) \methodname aligns the base model with \textit{multiple, potentially correlated} partial experimental observations (e.g., radial distribution functions (RDFs) or nuclear magnetic resonance (NMR) measurements). (3) \methodname yields an aligned model $\mt(x)$ that approximates the true underlying real-world distribution $\nu(x)$.}
    \label{fig:teaser}
    \vspace{-4pt}
\end{figure}

To address this gap, we propose \textbf{Adversarial Distribution Alignment from Partial Observations (\methodname)}, an algorithm that first learns a generative model of fully observed simulated data, and then aligns it with partially observed experimental measurements (see \Cref{fig:teaser}). \methodname aligns the model with experimental observable \emph{distributions} via a min–max objective. Our algorithm alternates between learning to distinguish observable samples generated by the model from samples drawn from the experimental dataset, and updating the generative model accordingly. By leveraging real-world data as a scalable source of supervision, \methodname helps bridge the simulation-to-experiment gap when training machine learning models of the physical world.

Importantly, these real-world data distributions correspond to the true targets of interest in downstream applications such as materials and drug discovery, where accurate property prediction enables screening of promising candidates. For example, \methodname can align a generative model of protein structures---pre-trained on classical molecular dynamics simulations as the initial physical prior---with high-dimensional, noisy cryo-EM observations to better capture the real-world distribution of protein states (amino acid positions). We preview such an application in \Cref{sec:exp_proteins}.

Our main contribution is the introduction of \methodname as a framework that leverages physical knowledge through simulated data, while mitigating the approximation errors of purely simulation-based approaches by aligning to real-world distributions. We prove that \methodname recovers the target observable distribution, even in the presence of multiple, potentially correlated observables. We also show empirically that \methodname can align generative models to diverse molecular and protein observables, with alignment to experimental distributions improving as additional observables are incorporated.

\section{Distribution Matching with Observables}
\label{sec:method}

Our goal is to train a generative model $\mt(x)$ that matches the distribution of observables measured from an underlying system $\nu(x)$. In this context, we define observables to be any function of the state of the system $x$: 

\[
o^{(i)}: X \rightarrow O^{(i)} \subseteq \mathbb{R}^{k_i},
\]

where $k_i$ is the dimensionality of the observable $i$, and $i \in \{1,\dots,m\} = I$ indexes the $m$ observables. We only assume access to the observable quantities drawn from our underlying system, and presume no direct access to the distribution of true states $\nu(x)$. We additionally note that our $m$ observables may be arbitrarily correlated.

In general, modeling $\nu(x)$ is challenging because the amount of data for the full system might be limited or impossible to access, leaving observables that only offer a \textit{partial} view of the underlying system. Therefore, to make it more feasible to model $\nu(x)$, we leverage a model, $\mb(x)$, pre-trained on \emph{fully observed} data from an imperfect simulator of the experimental distribution, e.g., data generated from a numerical simulation. Our goal is to shift the base distribution to produce a learned distribution $\mt(x)$, which satisfies the constraint:

\begin{equation}
\label{eqn:obs_constraint}
o^{(i)}_\# \mt = o^{(i)}_\# \nu, \quad \forall i,
\end{equation}
where $o^{(i)}_\# \nu$ denotes the \emph{pushforward} of the distribution $\nu$ through the observable $o^{(i)}$, that is, the probability distribution over observable $i$ obtained by sampling $x \sim \nu(x)$ and mapping it to $o^{(i)}(x)$. Succinctly, we wish to learn a distribution $\mt$ whose observables match the experimental observable distribution.

The objective in \cref{eqn:obs_constraint} is generally under-constrained, since the observable is only a partial observation of the underlying system, and many states could map to the same observable value. Therefore, we regularize the solution by enforcing closeness to the base distribution via a KL divergence. This acts as an informed prior, supplementing sparse and lossy real world data with reasonably accurate simulator data. If we define $\Mo = \{ \mu : o^{(i)}_\# \mu = o^{(i)}_\# \nu ; \forall i \in I\}$ to be the set of all probability measures satisfying \cref{eqn:obs_constraint},
our full objective becomes:

\begin{equation}
\label{eqn:full_objective}
\begin{aligned}
\arg \min_{\mt} \quad & D_{\mathrm{KL}}(\mt \,\|\, \mb) \\
\text{s.t.} \quad & \mt \in \Mo.
\end{aligned}
\end{equation}

From an information-theoretic perspective, \Cref{eqn:full_objective} corresponds to an information projection of the base distribution onto the set of distributions satisfying the observable constraints, with the KL term acting as an entropic regularizer (see \Cref{fig:info_proj}). We note the connection between this objective and inverse reinforcement learning (IRL), where the goal is to learn a reward function explaining expert behavior \citep{Jaynes1957, max_ent_irl}. In this framework, the learned reward implicitly distinguishes expert behavior from that induced by the current policy, and the policy is iteratively improved by optimizing this reward. In our setting, we have experimental observables instead of expert trajectories, while the policy corresponds to a generative model.
In contrast to prior work that matches only \textit{expectations} of reference features, our formulation seeks to match the \textit{full distribution} of multiple, potentially correlated observables.

\subsection{Comparison of our Problem Setting to Conditional Generation and Expectation Alignment}
\label{sec:comp_of_method}

To build intuition about our problem setting, we first contrast it with other related formulations. We discuss conditional generative modeling and expectation alignment (EA) approaches \cite{smith2026calibratinggenerativemodelsdistributional, Cesari2018, Soper1996}, which align distributions by matching observable moments, to motivate our full algorithm in \Cref{sec:adapo}.

\paragraph{Comparison with conditional generative modeling.} We first consider conditional generative modeling, which explicitly factorizes the target distribution into an observable prior and a generative model conditioned on an observable value. If one had access to a paired dataset $\mathcal{D} = {(x_j, o^{(i)}_j = o^{(i)}(x_j))}$, one could attempt to satisfy \cref{eqn:full_objective} by training a conditional model $\mb(x \mid o^{(i)}_j)$ and sampling from: $\mt(x) = \int \mb(x \mid o^{(i)})d\nu(o^{(i)}).$

While this approach is viable when paired samples are available for a \emph{single} observable, it fails in the multi-observable setting when we have access only to marginal distributions
$\{\nu(o^{(i)})\}_{i=1}^m$. Having access only to marginals prevents conditioning on values sampled from the joint distribution of observables, making it impossible to simultaneously satisfy the observable and KL constraints in \Cref{eqn:full_objective}.

\paragraph{Comparison to expectation alignment methods.} We additionally contrast our problem setting, which matches full distributions, with methods that align only to observable expectations. Throughout this paper, we refer to this class of methods as \emph{expectation alignment (EA)}. In this formulation, one seeks a distribution $\mt$ that minimizes a KL divergence to a base distribution while matching the expected values of a prescribed set of observables:

\begin{equation}
\begin{aligned}
\label{eqn:EA_objective}
\arg \min_{\mt}\quad & D_{\mathrm{KL}}(\mt \,\|\, \mb) \\
\text{s.t.}\quad  & \mathbb{E}_{x \sim \mt}[o^{(i)}(x)] = \mathbb{E}_{x \sim \nu}[o^{(i)}(x)], \quad \forall i \in I .
\end{aligned}
\end{equation}

In theory, under mild regularity assumptions, EA can recover the full distribution-matching objective. This follows by augmenting the observable set with all polynomial moments, since enforcing \Cref{eqn:EA_objective} over this family corresponds to matching all mixed moments of the observable distributions. In the limit of infinitely many moments, this uniquely identifies the observable marginals and recovers the solution to \Cref{eqn:full_objective}.

In practice, however, this approach is infeasible: the number of moments grows combinatorially with the observable dimension and degree, and exact recovery of the target distribution generally requires matching infinitely many moments (see \Cref{exp:synthetic} for an empirical example). From this perspective, matching the full observable distribution amounts to enforcing equality over a sufficiently rich class of test functions of the observables. We instead propose parameterizing a discriminator as a neural network that directly learns functions of the observables, enabling full distributional alignment beyond matching a finite number of moments.

\subsection{Adversarial Distribution Alignment with Partial Observations (ADA)}
\label{sec:adapo}

We now present our algorithm, Adversarial Distribution Alignment with Partial Observations (\methodname), which performs full distributional alignment using only partial observations. To realize \cref{eqn:full_objective}, we re-frame the objective as a dual optimization problem, learning the Wasserstein distance between generated and real experimental observables in the same way as \citet{arjovsky2017wassersteingan}. Together with the known connections between GANs and inverse reinforcement learning \citep{finn2016connectiongenerativeadversarialnetworks, ho2016generativeadversarialimitationlearning}, this motivates an adversarial optimization procedure for \Cref{eqn:full_objective}. 

Specifically, to enforce the observable constraint in \Cref{eqn:obs_constraint}, we use a Wasserstein distance over the space of observables to learn a pseudo-metric over state distributions. We will refer to this pseudo-metric as the observable Wasserstein distance. For each observable, we define the observable Wasserstein distance between the learned and target distributions as follows:

\begin{equation}
\begin{aligned}
  d^{(i)}(\mu, \nu)
    &= \sup_{f^{(i)} \in \mathrm{Lips}(1)} \Big( \mathbb{E}_{o^{(i)}_{\#} \mu}[f^{(i)}] - \mathbb{E}_{o^{(i)}_{\#} \nu}[f^{(i)}] \Big),
\end{aligned}
\end{equation}

where $\mathrm{Lips}(1)$ is the class of all Lipschitz-1 functions. In practice, this Wasserstein distance is approximated by learning a discriminator $f_\theta^{(i)}(o^{(i)}(x))$ with a local Lipschitz constant enforced by a gradient penalty \cite{arjovsky2017wassersteingan}.

We then parameterize the soft version of our constraint objective (\Cref{eqn:full_objective}) as:

\begin{equation}
\label{eqn:soft}
\begin{aligned}
\arg \max_{\mt} \quad &  - D_{\mathrm{KL}}(\mt \,\|\, \mb) - \beta \sum_{i \in I} d^{(i)}(\mt, \nu),
\end{aligned}
\end{equation}

which corresponds to the mini-max objective:
\begin{equation}
\begin{aligned}
\label{eqn:minimax}
    &\max_{\mt} \; \min_{(f^{(i)}) \in \mathrm{Lip}(1)^{|I|}} \mathcal{L}\bigl(\mt, (f^{(i)}), \beta \bigr) \\
    &\text{where} \\
    &\mathcal{L}\bigl(\mt, (f^{(i)}), \beta \bigr)
    = -D_{\mathrm{KL}}\bigl(\mt \,\|\, \mu_{base}\bigr) \\
    &\quad + \beta \sum_{i \in I} \left(
          \mathbb{E}_{(o^{(i)})_{\#} \mt}[f^{(i)}]
          - \mathbb{E}_{(o^{(i)})_{\#} \nu}[f^{(i)}]
      \right).
\end{aligned}
\end{equation}

We propose to alternate between learning our observable Wasserstein distance and improving our generative model to find the distribution closest to our base model that matches the target observable distribution (\Cref{fig:info_proj}).

\begin{figure}
    \centering
    \includegraphics[width=0.85\linewidth] {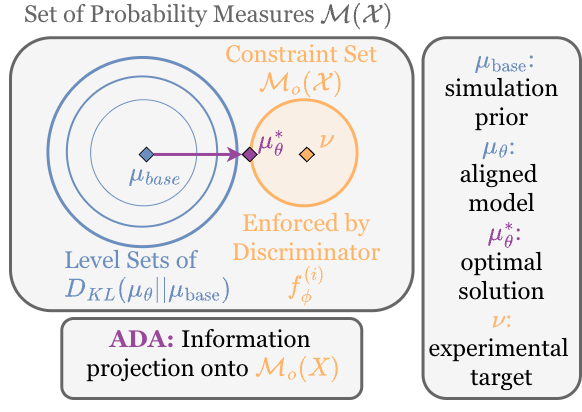}
    \caption{\textbf{
    \methodname finds the closest distribution $\mt^*$ to a base distribution $\mb$ that satisfies the observable constraints $o^{(i)}_\# \mt = o^{(i)}_\# \nu$ for all observables $i \in \{1,\dots,m\}$.} The resulting feasible set is denoted by $\Mo$ (shown in orange). The KL term in \Cref{eqn:full_objective} acts as an entropic regularizer, reflecting the fact that the observable constraints in \Cref{eqn:obs_constraint} do not uniquely specify the underlying distribution, since many states can map to the same observable values. The discriminator $f^{(i)}_\phi$ learns the observable Wasserstein distance to align the model with the distributional observable constraint.
    }
    \label{fig:info_proj}
\end{figure}

\paragraph{Algorithm summary.} 

We summarize our overall algorithm. We first initialize our generative model as $\mt=\mb$, where $\mb$ is trained on the simulated data distribution. Starting from this initialization, our algorithm iterates the following three steps to align $\mt$ with the experimental data. (see Alg. \ref{alg:observable_alignment}). First, we sample from the current iteration of the aligned generative model $\mt(x)$. We then evaluate observables from the generated samples and compute the Lagrangian of our objective (\cref{eqn:minimax}). We finally update the generative model and the discriminators using stochastic gradient descent on the Lagrangian. We also emphasize that each discriminator may be trained independently per observable. As we will show later in \Cref{sec:theory}, this is sufficient for satisfying the desired observable constraints, even in the case of multiple, correlated observables. 

\Cref{tab:method_capabilities} summarizes the capability of \methodname and compares it with conditional generative modeling and EA approaches discussed in \Cref{sec:comp_of_method}. In short, while conditional generation can handle single observables and EA methods can align with a finite set of moments, \methodname can recover the full distribution of multiple, potentially correlated observables.\footnote{In contexts where we only have access to a distributional mean, \methodname can subsume EA based alignment by restricting the function class to a linear discriminator. This is well studied in the inverse reinforcement learning literature \citet{max_ent_irl}.} 

\begin{figure}[ht!]
\centering
\newcounter{tempFigure}          %
\setcounter{tempFigure}{\value{figure}} %
\renewcommand{\figurename}{ALGORITHM}
\setcounter{figure}{0}
\hrule
\caption{Adversarial Distribution Alignment from a Pretrained Generative Model}
\setcounter{figure}{\value{tempFigure}} %
\hrule
\vspace{3pt}

\label{alg:observable_alignment}
\begin{algorithmic}[1]
\STATE \textbf{Input:}
\STATE \quad Pretrained base generator $\mu_{\text{base}}(x)$
\STATE \quad Experimental observable datasets 
$\mathcal{D}^{(i)} = \{\, o^{(i)}_j \sim (o^{(i)})_{\#}\nu \,\}$ for each observable $i \in I$
\STATE \textbf{Initialize:}
\STATE \quad Generator parameters $\theta$ with $\mu_\theta \leftarrow \mu_{\text{base}}$
\STATE \quad Observable critics $f^{(i)}_{\phi^{(i)}} : O_i \rightarrow \mathbb{R}$ for all $i \in I$
\STATE
\WHILE{not converged}
    \STATE Sample model configurations $\{x_j\}_{j=1}^{N} \sim \mu_\theta$
    
    \FOR{each observable $i \in I$}
        \STATE Compute model observables $u^{(i)}_j \leftarrow o^{(i)}(x_j)$
        \STATE Sample reference observables $\{\tilde{o}^{(i)}_j\}_{j=1}^{N} \sim \mathcal{D}^{(i)}$
    \ENDFOR
    
    \STATE Construct empirical push forward measures:
    \[
    (o^{(i)})_{\#}\mu_\theta \approx \frac{1}{N}\sum_{j=1}^{N}\delta_{u^{(i)}_j},
    \qquad
    (o^{(i)})_{\#}\nu \approx \frac{1}{N}\sum_{j=1}^{N}\delta_{\tilde{o}^{(i)}_j}
    \]
    
    \[
    \begin{aligned}
    \mathcal{L}(\theta, \{\phi^{(i)}\}) &=
    -D_{\mathrm{KL}}\!\bigl(\mu_\theta \,\|\, \mu_{\text{base}}\bigr) \\
    &\quad + \beta \sum_{i \in I}
    \Big(
    \mathbb{E}_{(o^{(i)})_{\#}\mu_\theta}[f^{(i)}_{\phi^{(i)}}]
    -
    \mathbb{E}_{(o^{(i)})_{\#}\nu}[f^{(i)}_{\phi^{(i)}}]
    \Big).
    \end{aligned}
    \]
    \STATE Update generator (ascent, use Adjoint Matching to compute $\nabla_\theta \mathcal{L}$):
    \[
    \theta \leftarrow \theta + \eta_\theta \nabla_\theta \mathcal{L}
    \]
    
    \FOR{each observable $i \in I$}
        \STATE Update critic (descent):
        \[
        \phi^{(i)} \leftarrow \phi^{(i)} - \eta_\phi \nabla_{\phi^{(i)}} \mathcal{L}
        \]
    \ENDFOR
\ENDWHILE
\vspace{3pt}
\hrule
\end{algorithmic}
\end{figure}

\paragraph{Practical implementation details.} We describe how we optimize the Lagrangian \Cref{eqn:minimax}, which balances the observable Wasserstein distance with the prior. We note that the Lagrangian does not need to be explicitly calculated as long as we can optimize $\mu$ and $(f^{(i)})$ with respect to it. Many methods which are capable of optimizing a generative model with respect to a reward function could work in practice. In this work, we parameterize $\mt$ and $\mb$ as diffusion models \citep{song2021scorebasedgenerativemodelingstochastic} and we use adjoint matching \cite{domingoenrich2025adjointmatchingfinetuningflow} to obtain unbiased gradient estimates of the Lagrangian with respect to $\mu$ without requiring backpropagation through the sampling process. This enables efficient optimization of the distribution $\mu$ directly.

We note that adjoint matching requires observables to be differentiable. Although this may seem restrictive, many structural observables and energy functions can be computed efficiently in a differentiable manner, and entropy regularized reinforcement learning could be used in the case of non-differentiable observables \citep{Williams1992, schulman2017proximalpolicyoptimizationalgorithms}.

\subsection{Theoretical Convergence Analysis}
\label{sec:theory}

We now address the theoretical guarantees of our objective.
Our goal with the theoretical analysis is to show that the objective in \Cref{eqn:minimax}, (1) has a unique solution and (2) converges to the desired reference observable distribution $o^{(i)}_\#\nu$ as we increase the weight $\beta$ on the observable Wasserstein distance. We first build intuition by analyzing the scenario where there is only a single observable, and then provide theorems for the general case of multiple, potentially correlated observables. 

\paragraph{Single observable case.} For the case of a single observable, we can directly solve \Cref{eqn:full_objective} to find the optimal distribution. Following previous work \citep{max_ent_irl, Jaynes1957, levine2018reinforcementlearningcontrolprobabilistic}, we consider the Lagrangian of the hard constrained optimization problem. To delineate it from the Lagrangian of \Cref{eqn:soft}, we denote it by $\mathcal Z$:

\begin{equation}
\begin{aligned}
\mathcal{Z}(\mt, \lambda, \alpha)
&= \int dx\, \mt(x)\log\!\left(\frac{\mt(x)}{\mb(x)}\right) \\
\quad &+ \int do\, \lambda(o)\bigl(\nu(o) - \mt(o)\bigr) \\
\quad &+ \alpha\!\left(\int dx\, \mt(x) - 1 \right)
\end{aligned}
\end{equation}

where $\mt(o) = \int dx \delta(o - o(x)) \mt(x)$ using $\delta$ as the Dirac delta function. We also note that there is a Lagrange multiplier for each value the observable could take on ($\lambda(o)$). 

If we differentiate to find the optimal $\mt$, we find that: $\mt(x) = \mb(x)\,\frac{e^{\lambda(o(x))}}{Z}.$
To solve for $\lambda(o(x))$, we note that by definition of the Dirac delta function:
\begin{equation}
\begin{aligned}
\mt(o) = \int dx \delta(o - o(x)) \mt(x) 
= 
\mb(o) \frac{e^{\lambda(o)}}{Z}.
\end{aligned}  
\end{equation}
Considering our constraint on the observable distribution:
$\nu(o) = \mt(o) = \mb(o) e^{\lambda(o)}/Z,$
we see that:
$\lambda(o) = \log ( Z \; \nu(o)/\mb(o)),$
yielding a final distribution:
\begin{equation}
\mt^*(x) = \mb(x) \frac{\nu(o)}{\mb(o)}  
\end{equation}
that satisfies $\mt(o) = \nu(o)$. In short, for a single observable, \Cref{eqn:full_objective} has a solution that recovers the target observable distribution by tilting the base distribution towards regions where it underrepresents the observable density. Although solving for the closed-form tilted distribution is possible for a single observable--- since we can estimate $\nu(o)$ from samples---the same derivation in the multi-observable setting would require access to the joint distribution of experimental observables (see \Cref{sec:comp_of_method} for details). Since we assume access only to marginal observable distributions, we instead provide a different analysis to handle the multi-observable case in the following section.

\paragraph{General case.} Analyzing the general case with multiple observables, requires more care. We present the following guarantees:

\newcommand{\solL}[1]{\mathrm{sol}\,\mathcal{L}\bigl( \beta_{#1}\bigr)}
\begin{theorem}[Existence and uniqueness of the saddle point]
\label{thm:saddle-point}
Assume $X$ is a compact Polish space and that $\mb$ has full support on $X$. Additionally, allow $o^{(i)}$ to be continuous for $i \in I$. For any $\beta \in \mathbb{R}$, \cref{eqn:minimax} admits a saddle point $(\mu^*, (f^{(i)})^*)$, where $\mu^*$ is unique. Consequently, the recovered distribution is well defined. We denote this solution by $\mu^* \in \solL{}$.
\end{theorem}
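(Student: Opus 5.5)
We plan to prove the statement with a minimax theorem (Sion's) applied on suitable compact convex sets, and then obtain uniqueness of $\mu^*$ from strict concavity. Two reductions make this work. Since $X$ is compact and each $o^{(i)}$ is continuous, every $O^{(i)} = o^{(i)}(X)$ is a compact subset of $\mathbb{R}^{k_i}$. Adding a constant to $f^{(i)}$ does not change $\mathcal{L}$, so we may restrict to $\mathcal{F}^{(i)} = \{ f \in \mathrm{Lip}(1)(O^{(i)}) : f(o_0^{(i)}) = 0\}$ for a fixed base point $o_0^{(i)}$. By Arzel\`a--Ascoli, $\mathcal{F}^{(i)}$ is convex and compact in the sup norm. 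On the measure side we use $\mathcal{P}(X)$ with the weak topology, which is compact by Prokhorov since $X$ is compact Polish. For $\beta \ge 0$ the objective $\mathcal{L}$ is linear in $(f^{(i)})$ for fixed $\mu$ and concave in $\mu$ for fixed $(f^{(i)})$. (For $\beta<0$ one replaces $f$ by $-f$, which is harmless because $\mathcal{F}^{(i)}$ is symmetric; for $\beta=0$ the claim is trivial with $\mu^*=\mb$.)

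Next we check the semicontinuity hypotheses of Sion's theorem. For fixed $\mu$, the map $(f^{(i)}) \mapsto \mathcal{L}$ is continuous in the sup norm. For fixed $(f^{(i)})$, the map $\mu \mapsto \mathbb{E}_{\mu}[f^{(i)}\circ o^{(i)}]$ is weakly continuous, because $f^{(i)}\circ o^{(i)}$ is bounded and continuous on $X$. The map $\mu \mapsto -D_{\mathrm{KL}}(\mu\|\mb)$ is weakly upper semicontinuous by the Donsker--Varadhan representation, being an infimum of continuous linear functionals. Hence $\mathcal{L}$ is usc and concave in $\mu$ and continuous and convex (linear) in $f$, both domains are convex and compact, and Sion's theorem gives
\[
\max_{\mu}\min_{f}\mathcal{L} = \min_{f}\max_{\mu}\mathcal{L}.
\]
Compactness together with the semicontinuities then yields attainment: $g(\mu) = \min_f \mathcal{L}(\mu,f)$ is usc and so has a maximizer $\mu^*$, and $h(f) = \max_\mu \mathcal{L}(\mu,f)$ is lsc as a supremum of continuous functions and so has a minimizer $f^*$. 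Equality of the two values makes $(\mu^*, f^*)$ a saddle point. The value is finite because $\mathcal{L}(\mb,f)$ is bounded, so $D_{\mathrm{KL}}(\mu^*\|\mb)<\infty$. The maximizer of the inner problem also has an explicit Gibbs form, $\mu \propto \mb\, \exp\bigl(\beta\sum_i f^{(i)}\circ o^{(i)}\bigr)$, which we can use as a consistency check. Full support of $\mb$ guarantees that this tilt is a genuine probability measure equivalent to $\mb$ on all of $X$.

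For uniqueness of $\mu^*$, note that the set of saddle-point $\mu$-components equals $\arg\max g$. The function $g(\mu) = -D_{\mathrm{KL}}(\mu\|\mb) - \beta\sum_i d^{(i)}(\mu,\nu)$ is strictly concave on $\{D_{\mathrm{KL}}<\infty\}$, since $-D_{\mathrm{KL}}(\cdot\|\mb)$ is strictly concave there and each $d^{(i)}(\cdot,\nu)$, a supremum of linear functionals, is convex. Two distinct maximizers would therefore give a midpoint with strictly larger value, a contradiction. The discriminators need not be unique, which is consistent with the statement.

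The main obstacle is the topological bookkeeping in the middle step: confirming the upper semicontinuity of $-D_{\mathrm{KL}}$ in the weak topology, and handling the fact that $\mathcal{L}$ takes the value $-\infty$ on part of $\mathcal{P}(X)$, which Sion's theorem tolerates for usc extended-valued concave functions. If this caused trouble, the fallback is to restrict to the sublevel set $\{D_{\mathrm{KL}}(\mu\|\mb)\le C\}$ with $C$ chosen larger than the value at $\mb$ plus $2\beta\sum_i \mathrm{diam}(O^{(i)})$. This set is weakly compact and convex, and the saddle point must lie in it.
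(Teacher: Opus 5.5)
Your proposal is correct and follows essentially the same route as the paper. Both apply Sion's minimax theorem on the weak$^*$-compact convex set of probability measures, use that $-D_{\mathrm{KL}}(\cdot\|\mb)$ is weakly upper semicontinuous and concave and that the Lagrangian is linear and continuous in the discriminators, and get uniqueness of $\mu^*$ from strict concavity. Your extra steps tighten points the paper leaves implicit: normalizing $f^{(i)}(o_0^{(i)})=0$ so that Arzel\`a--Ascoli makes the discriminator class compact and the infimum over $f$ is attained (the paper only obtains $\max\inf=\inf\max$ and never exhibits $(f^{(i)})^*$), the KL-sublevel-set fallback for the $-\infty$ values, and deriving uniqueness from strict concavity of $g=-D_{\mathrm{KL}}(\cdot\|\mb)-|\beta|\sum_i d^{(i)}(\cdot,\nu)$. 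The only slip is your Gibbs-tilt remark: the tilt is a probability measure equivalent to $\mb$ because its exponent is bounded, not because $\mb$ has full support; since that remark is only a consistency check, the argument is unaffected.
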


\begin{proof}
The proof is done through Sion's Theorem utilizing the compactness of probability measures in weak* (see  \Cref{subsec:saddle_point_prof} for the full proof)
\end{proof}

\begin{theorem}[Asymptotic convergence in Wasserstein]
\label{thm:wasserstein-convergence}
Assume $D_{\mathrm{KL}}(\nu \| \mu_{base}) < \infty$. Let $\mu^*_\beta \in \solL{}$ for $\beta \in \mathbb{R}^+$. Then, as $\beta \to \infty$, we have
\[
d^{(i)}(\mu^*_\beta, \nu) \to 0
\quad \text{for all } i \in I.
\]
\end{theorem}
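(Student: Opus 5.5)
The argument is a comparison of objective values: the saddle point $\mu^*_\beta$ from \Cref{thm:saddle-point} maximizes the soft objective, so its value is at least the value at any competitor, and $\nu$ is an admissible competitor. The inner minimization over $(f^{(i)})$ in \cref{eqn:minimax} returns $-\beta\sum_i d^{(i)}(\mt,\nu)$, because $\mathrm{Lip}(1)$ is closed under negation. Hence $\mu^*_\beta$ maximizes
\[
J_\beta(\mu) = -D_{\mathrm{KL}}(\mu\,\|\,\mb) - \beta \sum_{i\in I} d^{(i)}(\mu,\nu)
\]
over probability measures on $X$, which is exactly \cref{eqn:soft}. The competitor $\nu$ lies in $\Mo$, so $d^{(i)}(\nu,\nu)=0$ for all $i$. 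By assumption $J_\beta(\nu) = -D_{\mathrm{KL}}(\nu\|\mb)$ is finite and does not depend on $\beta$.

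The key inequality follows from optimality of $\mu^*_\beta$:
\[
-D_{\mathrm{KL}}(\mu^*_\beta\|\mb) - \beta\sum_{i} d^{(i)}(\mu^*_\beta,\nu) \;\ge\; -D_{\mathrm{KL}}(\nu\|\mb).
\]
Since $D_{\mathrm{KL}}(\mu^*_\beta\|\mb)\ge 0$, dropping it gives, for each $i$,
\[
0 \le d^{(i)}(\mu^*_\beta,\nu) \le \sum_{j} d^{(j)}(\mu^*_\beta,\nu) \le \frac{D_{\mathrm{KL}}(\nu\|\mb)}{\beta}.
\]
Nonnegativity of $d^{(i)}$ holds because $f\equiv 0$ is admissible in the supremum. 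Letting $\beta\to\infty$ yields $d^{(i)}(\mu^*_\beta,\nu)\to 0$, and in fact at rate $O(1/\beta)$. The same inequality also shows $D_{\mathrm{KL}}(\mu^*_\beta\|\mb)\le D_{\mathrm{KL}}(\nu\|\mb)$ uniformly in $\beta$. This is not needed for the statement, but it would give weak* precompactness of the family $\{\mu^*_\beta\}$ if one later wanted to identify the limit as the information projection solving \cref{eqn:full_objective}.

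The main obstacle is justifying the reduction from the saddle point to a maximizer of $J_\beta$. I would handle it with the saddle inequality: for a saddle point $(\mu^*,(f^{(i)})^*)$,
\[
\mathcal L(\mu^*, (f^{(i)})^*, \beta) = \min_f \mathcal L(\mu^*, f, \beta) \ge \min_f \mathcal L(\nu, f,\beta) = J_\beta(\nu).
\]
Here the equality is the minimizing side of the saddle definition. The inequality holds because the max--min value equals the min--max value under Sion's theorem, as used in \Cref{thm:saddle-point}, so $\mu^*$ attains the outer maximum of $\mu\mapsto\min_f\mathcal L(\mu,f,\beta)$. This is all that the inequality above requires.

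One technical point needs care: the pushforwards $o^{(i)}_\#\mu$ must have finite first moments so that $d^{(i)}$ is finite. Compactness of $X$ and continuity of $o^{(i)}$ make each $O^{(i)}$ compact and the pushforwards compactly supported, so this is automatic.
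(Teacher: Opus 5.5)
Your proof is correct and follows the paper's argument: you compare the saddle-point value with the value at the competitor $\nu$, where the discriminator term vanishes, then drop the nonnegative KL term to get $\sum_i d^{(i)}(\mu^*_\beta,\nu)\le D_{\mathrm{KL}}(\nu\|\mb)/\beta$. The only difference is that the paper reads the comparison directly off the maximizing side of the saddle inequality, $\mathcal{L}(\mu^*_\beta,(f^{(i)}_\beta)^*)\ge\mathcal{L}(\nu,(f^{(i)}_\beta)^*)=-D_{\mathrm{KL}}(\nu\|\mb)$, so your appeal to Sion's minimax equality is unnecessary, since the two saddle inequalities alone already show that $\mu^*_\beta$ maximizes $\mu\mapsto\min_f\mathcal{L}(\mu,f,\beta)$.
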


\begin{proof}
The proof follows from $D_{\mathrm{KL}}(\nu \| \mu_{base}) < \infty$ providing a bound on our Lagrangian under optimality (see \Cref{subsec:asymptotic_converence})
\end{proof}

\begin{theorem}[Asymptotic convergence to the constraint set]
\label{thm:constraint-convergence}
Let $(\mu_\beta^*)_{\beta>0}$ be the sequence of saddle-point solutions from \cref{thm:wasserstein-convergence}. Suppose that $d^{(i)}(\mu_\beta^*,\nu)\to 0$ for all $i\in I$. Then any weak limit point $\bar{\mu}$ of $(\mu_\beta^*)$ satisfies
\[
(o^{(i)})_\# \bar{\mu} = (o^{(i)})_\# \nu
\quad \text{for all } i\in I,
\]
that is, $\bar{\mu}\in \mathcal{M}_o$.
\end{theorem}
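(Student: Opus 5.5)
Fix $i \in I$, and let $\mu_{\beta_n}^* \rightharpoonup \bar\mu$ weakly along some sequence $\beta_n \to \infty$. The idea is to pass to the limit on the pushforward side. Since $X$ is compact and $o^{(i)}$ is continuous (the standing assumptions of \cref{thm:saddle-point}), weak convergence is preserved under pushforward. For any bounded continuous $g$ on $O^{(i)}$, the composition $g\circ o^{(i)}$ is bounded and continuous on $X$, so
\[
\int g \, d(o^{(i)})_\#\mu^*_{\beta_n} = \int g\circ o^{(i)} \, d\mu^*_{\beta_n} \to \int g\circ o^{(i)}\, d\bar\mu = \int g\, d(o^{(i)})_\#\bar\mu .
\]
Hence $(o^{(i)})_\#\mu^*_{\beta_n} \rightharpoonup (o^{(i)})_\#\bar\mu$. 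This is the continuous mapping theorem.

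Next, the hypothesis $d^{(i)}(\mu^*_{\beta_n},\nu)\to 0$ has to be converted into weak convergence $(o^{(i)})_\#\mu^*_{\beta_n} \rightharpoonup (o^{(i)})_\#\nu$. The image $o^{(i)}(X)\subseteq\mathbb{R}^{k_i}$ is compact, so every measure involved lives on a fixed compact set $K$. On $K$ the quantity $d^{(i)}$ is the Kantorovich--Rubinstein dual form of $W_1$. Taking the supremum over $f$ and $-f$ gives its absolute value, so it is a genuine metric and vanishes only when the measures agree. Moreover, on a compact set $W_1$ metrizes weak convergence.

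To make this step self-contained, I would show directly that $\int f\,d(o^{(i)})_\#\mu^*_{\beta_n}\to\int f\,d(o^{(i)})_\#\nu$ for every Lipschitz $f$. Any Lipschitz $f$ is a scalar multiple of a $1$-Lipschitz function, so this follows from the hypothesis. By the Portmanteau theorem, convergence against bounded Lipschitz test functions is equivalent to weak convergence, and on compact $K$ every Lipschitz function is bounded.

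Combining the two steps, the sequence $(o^{(i)})_\#\mu^*_{\beta_n}$ converges weakly both to $(o^{(i)})_\#\bar\mu$ and to $(o^{(i)})_\#\nu$. Weak limits of probability measures on a metric space are unique, so $(o^{(i)})_\#\bar\mu=(o^{(i)})_\#\nu$. Since $i$ was arbitrary, $\bar\mu\in\mathcal{M}_o$.

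The main obstacle is the second step, specifically the sign convention. As written, $d^{(i)}$ has no absolute value, so the vanishing of $d^{(i)}$ must be read as vanishing of the supremum over the symmetric class $\mathrm{Lips}(1)$, which is closed under $f\mapsto -f$. This closure gives two-sided control, $\bigl|\int f\,d(\cdot)-\int f\,d\nu\bigr|\le d^{(i)}$. I would state this observation explicitly.

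Existence of weak limit points needs no separate argument: the space of probability measures on compact $X$ is weak* compact, as already used in \cref{thm:saddle-point}.
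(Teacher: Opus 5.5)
Your proposal is correct and follows the same route as the paper. Both arguments convert $d^{(i)}\to 0$ into weak convergence $(o^{(i)})_\#\mu^*_{\beta_n}\Rightarrow(o^{(i)})_\#\nu$, use continuity of $o^{(i)}$ to get $(o^{(i)})_\#\mu^*_{\beta_n}\Rightarrow(o^{(i)})_\#\bar\mu$, and conclude by uniqueness of weak limits; your direct Portmanteau argument with Lipschitz test functions, using that $\mathrm{Lips}(1)$ is closed under $f\mapsto -f$ to handle the missing absolute value, simply replaces the paper's appeal to the fact that $W_1$ metrizes weak convergence.
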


\begin{proof}
This is a simple extension of \cref{thm:wasserstein-convergence}, and is included to relate the convergence in observable Wasserstein to \cref{eqn:full_objective}. We prove this in \cref{subsec:weak_limit}.
\end{proof}

To summarize, our algorithm in theory converges and will be arbitrarily close to satisfying the constraint set for a sufficiently large $\beta$. Although the KL constraint on \cref{thm:wasserstein-convergence} may seem overly restrictive, we note that this constraint is satisfied for many models of interest. For instance, any set of particles in a closed and bounded box with positive density are sufficient. We also note that we have imposed no unnecessary restrictions on the push forward of our observables, only that they continuously map to real valued vector domains. We have not required that they be independent or uncorrelated. In fact, as long as the experimental dataset can be realized by a real measure with finite KL divergence from the base measure, we are able to get arbitrarily close to matching the observable measures in Wasserstein distance.

\section{Related Work}

\begin{table*}[ht]
\caption{\textbf{Observable Alignment naturally handles multiple observables.} Our method learns the marginal distribution of multiple, potentially correlated observables directly from data via a discriminator. In contrast, guidance-based approaches typically target a single observable and offer no guarantees for aligning multiple (potentially correlated) observables. To match the distribution of a single observable, guidance would need to rely on conditioning on samples from the target observable distribution (indicated by a $\triangle$), whereas Observable Alignment learns the distribution directly from data.}
\centering
\begin{tabular}{c cc cc c}
\toprule
\textbf{Method}
& \multicolumn{2}{c}{\textbf{Match Expectation}}
& \multicolumn{2}{c}{\textbf{Match Distribution}}
& \textbf{Requires Training} \\
& Single & Multiple
& Single & Multiple
& \\
\midrule
\textbf{Conditional Generative Model}
& $\checkmark$ & \xmark
& $\checkmark$ & \xmark
& classifier (or CFG) \\

\textbf{Expectation Alignment}
& \checkmark & \checkmark
& \xmark & \xmark
& generator \\

\textbf{\methodname} (ours)
& \checkmark & \checkmark
& \checkmark & \checkmark
& generator + discriminator \\
\bottomrule
\end{tabular}
\label{tab:method_capabilities}
\vspace{-6pt}
\end{table*}

\paragraph{Generative models.} \methodname builds on previous generative modeling frameworks. We parameterize $\mb$ and $\mt$ as use diffusion models \citep{song2021scorebasedgenerativemodelingstochastic, ho2020denoisingdiffusionprobabilisticmodels}, which learn to reverse a stochastic process that gradually corrupts samples from a data distribution into noise. Generative adversarial networks (GANs) \citep{gans} are also closely related to our work. GANs learn a target distribution by alternating between learning a discriminator, which distinguishes real samples from generated ones, and a generator trained to fool the discriminator. \methodname builds upon the GAN framework to learn distributions in the partially observed setting.

\paragraph{Maximum entropy inverse reinforcement learning.} Adversarial training also plays a central role in inverse reinforcement learning (IRL) \citep{max_ent_irl, finn2016guidedcostlearningdeep}, where the objective is to infer a reward function that explains observed behavior and to extract a policy consistent with that reward. The connection between GANs and IRL has been noted in prior work \citep{finn2016connectiongenerativeadversarialnetworks, ho2016generativeadversarialimitationlearning}, where the learned reward function acts as a discriminator and the policy serves as the generator. The maximum entropy IRL formulation \citep{Jaynes1957, max_ent_irl}, which augments reward learning with an entropic regularizer, closely aligns with our problem setting. However, rather than inferring a policy from demonstrations, we seek to align a generative model with experimental observations while leveraging simulated data as a prior. Importantly, we only leverage partial observations without requiring demonstrations of the full underlying state.

\paragraph{Learning from partial observations in atomistic generative modeling.}
In the context of computational chemistry and biology, experimental observables have been used to improve models through Iterative Boltzmann Inversion (IBI) \citep{Soper1996, Reith2003, Hanke_2017, Matin2024}, which iteratively refines an empirical pairwise potential to reproduce target structural observables, most commonly radial distribution functions. More recent simulation-in-the-loop approaches \citep{raja2025stabilityaware, Fuchs_2025, Han2025, gangan2025force} use automatic differentiation to improve a machine learning force field using observable quantities. Most similar to our approach is concurrent work that aligns the outputs of a generative model using a linear reward \citep{smith2026calibratinggenerativemodelsdistributional}.

However, we emphasize that these prior methods that leverage observables typically frame their objective as an \textit{expectation alignment (EA) problem} \citep{Cesari2018, Soper1996, raja2025stabilityaware, Maddipatla2025_alphafoldguidance, kolloff2025minimumexcessworkguidance, raghu2025cryoboltz, zarrouk2025linearscalingcalculationexperimentalobservables}. For instance, guidance methods \citep{kolloff2025minimumexcessworkguidance, Maddipatla2025_alphafoldguidance, raghu2025cryoboltz} modify the score of a diffusion model, often with a trained classifier, to produce samples that match the moments of observables. Similarly, \citet{smith2026calibratinggenerativemodelsdistributional} fix a linear reward signal that they use to fine-tune a generative model to match the expectation of an observable. While in theory these EA approaches could learn a full distribution by enforcing infinitely many moments of the target distribution, this is in practice infeasible and modeling only a finite number of moments leads to discrepancies, even in low dimensions (see \Cref{sec:comp_of_method} and \Cref{exp:synthetic}).      

Our problem setting differs in that we aim to match the full distribution of observed experimental data rather than only low-order statistics. For example, in computational biology, matching a single, expected protein conformation is insufficient; modeling the full experimental distribution is necessary to study folding and unfolding events, compute free energies, and estimate transition rates. Furthermore, instead of relying on a single reward signal as in the IRL formulation, we consider multiple, potentially correlated experimental observables that jointly constrain the target distribution. \methodname extends the EA and IRL formulations to handle full distributional constraints over multiple observables.

\section{Experiments}
\label{sec:exp}

We evaluate \methodname’s ability to align a generative model, pretrained on simulate data, with experimental distributions using only partial observations. We first compare \methodname to EA approaches similar to \citet{smith2026calibratinggenerativemodelsdistributional} to underscore the difference between full distributional alignment and moment matching (\Cref{exp:synthetic}). Next, to obtain controlled evaluation metrics, we use simulated data at increasing levels of fidelity to quantify how alignment improves as additional observables are provided (\Cref{sec:small_molecule}). Finally, we assess \methodname's ability to bridge the gap between a generative model trained on classical force field simulations and experimental measurements from the Protein Data Bank using cryo-EM observables \citep{pdb} (\Cref{sec:exp_proteins}). We provide experimental details, including further discussion of the chosen observables, in \Cref{apx:exp_details}.

\subsection{Synthetic Data}
\label{exp:synthetic}

\begin{figure}[ht!]
    \centering
    \includegraphics[width=0.95\linewidth]{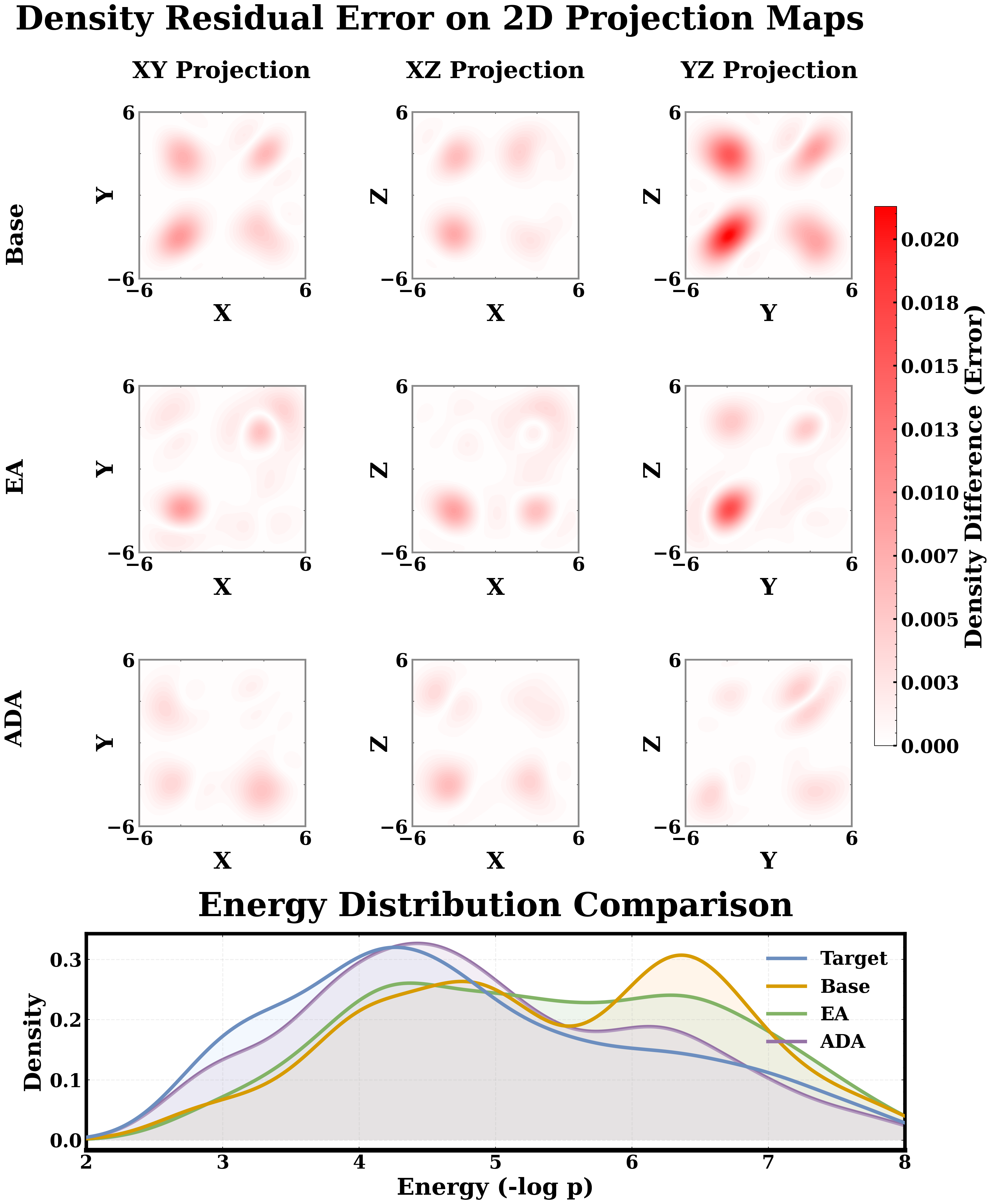}
    \caption{\textbf{Comparison between expectation alignment (EA) and \methodname on synthetic mixture-of-Gaussians benchmark.} We align a base distribution, defined as a mixture of Gaussians centered at the corners of a cube, with a target distribution where the variances and mixture weights have been perturbed. For the observables, we use pairwise coordinate projections, yielding correlated, multimodal marginals.  We report the $\ell_1$ pdf residual between target and generated distributions before and after alignment. \methodname consistently achieves larger reductions in distributional error on multidimensional observables, even when EA uses three moments. This is taken from a kernel density estimate of 2000 samples.}
    \label{fig:improvement_differences}
\end{figure}
 
To illustrate the limitations of EA methods, we consider a synthetic three-dimensional mixture-of-Gaussians benchmark (see \Cref{fig:mix_guassians}). We define the base distribution $\mb$ as an isotropic mixture with eight components located at the vertices of a cube, with uniform mixture weights. We define the target distribution $\nu$ by perturbing the variances and mixture weights for the Gaussians (see \Cref{apx:exp_details} for more details).

We choose observables $o^{(i)} : \mathbb{R}^3 \rightarrow \mathbb{R}^2$ as the three pairwise coordinate projections:
$\pi_{X_i,X_j}(x_1,x_2,x_3) = (x_i,x_j).
$
Despite their simplicity, these observables induce correlated, multimodal marginals.

For evaluation, we report two metrics: (1) the Wasserstein-1 distance between energy histograms of generated samples, where $E(x) = -\log \nu(x)$, and (2) the cluster assignment accuracy via the KL divergence
$
D_{\mathrm{KL}}(\mt(z) \,\|\, \nu(z)),
$
where
$\mt(z) = \int \mt(x) \nu(z \mid x)\,dx$
denotes the induced mixture weights of generated samples. Intuitively, these metrics quantify discrepancies in global energy structure and mass allocation across modes, respectively.

\paragraph{Results.} In \Cref{tab:alignment_metrics} and \Cref{fig:improvement_differences}, we find that \methodname successfully recovers the full target distribution using the correlated projection observables. In contrast, EA fails even when enforcing moments up to fourth order. Recovering a multimodal distribution from finitely many moments is not possible in general, and the number of moments that need to be enforced scales polynomially with the dimension of the problem, highlighting the fundamental limitations of expectation-based alignment.

\subsection{Correlated Small Molecule Observables}
\label{sec:small_molecule}

We next evaluate whether \methodname can align molecular conformational distributions using multiple correlated structural observables, and whether incorporating  observables improves alignment both on constrained and held-out quantities. As a controlled proxy for simulation-to-real transfer, we consider two simulator fidelities on the MD17 small-molecule benchmark \cite{md17}, aligning a low-cost semi-empirical GFN2-xTB potential \cite{Bannwarth2019_gfn2xtb} to a higher-fidelity density functional theory (DFT) reference using only observable-level supervision.

We progressively add structural observables to the alignment procedure---including the mean interatomic distance (MD), radius of gyration ($R_g$), bond lengths, hydrogen-bond distance, and COOH--ester distance (details in \Cref{apx:observables})---to study \methodname’s ability to reconcile multiple correlated marginals while preserving underlying molecular structure.

For evaluation, we report two complementary metrics. First, for each observable we compute the Wasserstein-1 distance ($W_1$) between histograms of generated and target samples, measuring marginal distributional alignment.  Beyond marginal comparisons, we evaluate the       
  model's ability to recover free energy surfaces (FES) over pairs of physically meaningful observables. For each observable we bin both the reference and generated samples into a      
  joint 2D histogram, normalize each to a discrete probability      
  distribution, and compute the Jensen–Shannon divergence. We report this over three observable pairs on aspirin,
  capturing hydrogen-bonding geometry, molecular compactness, and functional group
  arrangement.

\paragraph{Results.}
\Cref{fig:experiment2_hist} and \Cref{tab:cumulative_ablation} show that \methodname consistently improves alignment to the DFT target as additional observables are incorporated. Across all observables, \methodname achieves substantially lower $W_1$ as compared to EA while simultaneously reducing FES JSD, indicating that it preserves molecular correlations while enforcing accurate marginal constraints.

\begin{table*}[t]
\centering
\caption{\textbf{\methodname improves in alignment to the target distribution when given access to an increasing number of observables.} We train a base model on constant-temperature molecular dynamics simulations using the semi-empirical GFN2-xTB potential. This model is then aligned with the distribution induced by a more accurate DFT potential using structural observables. Each column reports the distance to the reference DFT observable distribution. Moving down the rows of the table adds observables to the alignment procedure, with the bottom row representing aligning with the full observable set: mean interatomic distance (MD), radius of gyration ($R_g$), bond lengths, hydrogen-bond distance, and COOH--ester distance. The upper-triangular part of the left side of the table represents improvements on held-out observables (highlighted in purple). We additionally report improvement on Jensen-Shannon divergences on three fully held-out free energy surfaces: (MD, Rg), (bond length, H-bond distance), and (H-bond distance, COOH–ester distance). These surfaces are not used during training (indicated by the purple highlight) and therefore measure preservation of joint molecular structure beyond the enforced observable marginals. Compared to Expectation Alignment (EA), \methodname better aligns to the target distribution (indicated with bold), especially with additional observables. %
}

\scriptsize
\setlength{\tabcolsep}{2.5pt}
\renewcommand{\arraystretch}{1.05}

\begin{tabular}{l cc|cc|cc|cc|cc || cc|cc|cc}
\toprule
& \multicolumn{10}{c}{Observable marginals $W_1 \downarrow$}
& \multicolumn{6}{c}{Held-out FES JSD $\downarrow$} \\
\cmidrule(lr){2-11}\cmidrule(lr){12-17}

Constraint
& \multicolumn{2}{c}{MD}
& \multicolumn{2}{c}{$R_g$}
& \multicolumn{2}{c}{Bond}
& \multicolumn{2}{c}{Hbond}
& \multicolumn{2}{c}{COOH}
& \multicolumn{2}{c}{A}
& \multicolumn{2}{c}{B}
& \multicolumn{2}{c}{C} \\

& Ours & EA & Ours & EA & Ours & EA & Ours & EA & Ours & EA
& Ours & EA & Ours & EA & Ours & EA \\

\midrule

Base
& \multicolumn{2}{c}{0.06}
& \multicolumn{2}{c}{0.02}
& \multicolumn{2}{c}{0.018}
& \multicolumn{2}{c}{0.40}
& \multicolumn{2}{c}{0.50}
& \multicolumn{2}{c}{0.22}
& \multicolumn{2}{c}{0.44}
& \multicolumn{2}{c}{0.38} \\

\midrule

+ MD
& \textbf{0.008} & 0.012
& \cellcolor{lightpurple}0.020 & \cellcolor{lightpurple}\textbf{0.009}
& \cellcolor{lightpurple}\textbf{0.007} & \cellcolor{lightpurple}0.010
& \cellcolor{lightpurple}\textbf{0.25} & \cellcolor{lightpurple}0.41
& \cellcolor{lightpurple}\textbf{0.37} & \cellcolor{lightpurple}0.39
& \cellcolor{lightpurple}0.23 & \cellcolor{lightpurple}\textbf{0.20}
& \cellcolor{lightpurple}\textbf{0.40} & \cellcolor{lightpurple}0.41
& \cellcolor{lightpurple}\textbf{0.35} & \cellcolor{lightpurple}\textbf{0.35} \\

+ RoG
& \textbf{0.009} & 0.012
& 0.010 & \textbf{0.009}
& \cellcolor{lightpurple}\textbf{0.010} & \cellcolor{lightpurple}0.011
& \cellcolor{lightpurple}\textbf{0.30} & \cellcolor{lightpurple}0.41
& \cellcolor{lightpurple}\textbf{0.43} & \cellcolor{lightpurple}0.48
& \cellcolor{lightpurple}0.20 & \cellcolor{lightpurple}\textbf{0.19}
& \cellcolor{lightpurple}0.42 & \cellcolor{lightpurple}\textbf{0.40}
& \cellcolor{lightpurple}0.35 & \cellcolor{lightpurple}\textbf{0.34} \\

+ Bond
& \textbf{0.004} & 0.019
& \textbf{0.005} & 0.009
& \textbf{0.004} & 0.009
& \cellcolor{lightpurple}\textbf{0.21} & \cellcolor{lightpurple}0.36
& \cellcolor{lightpurple}\textbf{0.46} & \cellcolor{lightpurple}0.48
& \cellcolor{lightpurple}\textbf{0.15} & \cellcolor{lightpurple}0.20
& \cellcolor{lightpurple}\textbf{0.41} & \cellcolor{lightpurple}0.43
& \cellcolor{lightpurple}\textbf{0.36} & \cellcolor{lightpurple}\textbf{0.36} \\

+ Hbond
& \textbf{0.007} & 0.017
& \textbf{0.009} & 0.010
& \textbf{0.006} & 0.011
& \textbf{0.06} & 0.19
& \cellcolor{lightpurple}\textbf{0.39} & \cellcolor{lightpurple}\textbf{0.39}
& \cellcolor{lightpurple}\textbf{0.13} & \cellcolor{lightpurple}0.36
& \cellcolor{lightpurple}\textbf{0.39} & \cellcolor{lightpurple}0.43
& \cellcolor{lightpurple}\textbf{0.33} & \cellcolor{lightpurple}0.36 \\

+ COOH
& \textbf{0.009} & \textbf{0.009}
& \textbf{0.002} & 0.012
& \textbf{0.006} & 0.011
& \textbf{0.06} & 0.31
& \textbf{0.04} & 0.06
& \cellcolor{lightpurple}\textbf{0.08} & \cellcolor{lightpurple}0.17
& \cellcolor{lightpurple}\textbf{0.13} & \cellcolor{lightpurple}0.20
& \cellcolor{lightpurple}\textbf{0.09} & \cellcolor{lightpurple}0.19 \\

\bottomrule
\end{tabular}
\label{tab:cumulative_ablation}
\end{table*}

\subsection{Noisy and High-Dimensional Cryo-EM Observable with Experimental Data}
\label{sec:exp_proteins}

\begin{figure}[ht!]
    \centering
    \includegraphics[width=\linewidth]{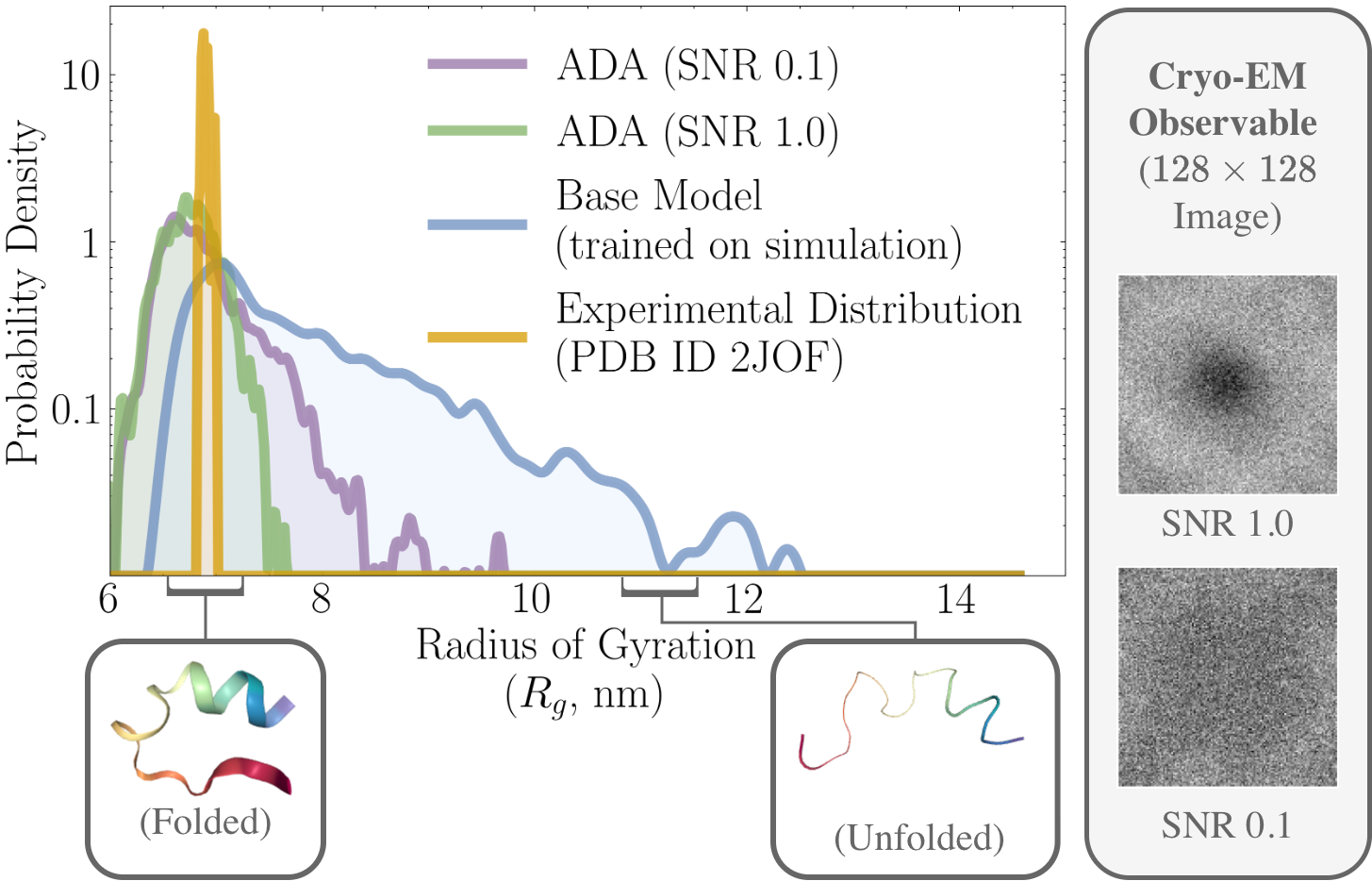}
    \caption{\textbf{\methodname shifts the simulated distribution to align with the experimentally measured Trp-cage structures by using only noisy, high-dimensional cryo-EM images.} The base generative model is trained on MD simulations using a classical force field. \methodname uses the cryo-EM images as observables to align with the experimental structures from the PDB. The long MD simulations run by a classical force field sample the unfolded states more often than seen in the 2JOF PDB entry, leading to a longer tail in the radius of gyration observable.}
    \label{fig:rg_shift_trp}
\end{figure}

Finally, we use the coarse-grained fast-folding proteins \citep{Majewski2023_torch_md_thermo, LindorffLarsen2011_fast_folders} as a test bed for aligning to experimental data using measurements from the Protein Data Bank (PDB) \citep{pdb}. We first train a base model to match the Boltzmann distribution induced by molecular dynamics simulations using a classical force field \citep{Majewski2023_torch_md_thermo, LindorffLarsen2011_fast_folders}. Starting from this base model, we then use experimental structures of the Trp-cage (ID 2JOF) and BBL (ID 2WXC) proteins from the PDB as our target distribution \citep{Wthrich1986_nmr}. 

For this experiment, \methodname uses cryo-EM images as observables. We simulate the image formation from the reference experimental structures \citep{Zhong2021_cryodrgn}, yielding noisy, high-dimensional observables ($128 \times 128$ pixels, see \Cref{fig:rg_shift_trp} and \Cref{fig:cryo_images}). We choose this evaluation setting since it provides a challenging and realistic benchmark that measures \methodname's ability to improve a prior derived from classical simulations using a small set ($<30$ structures) of real-world structures. Simulating the forward cryo-EM imaging enables us to control the level of noise of the observable and allows us to provide precise evaluation metrics by comparing to the reference experimental structures.

For evaluation, we report the $W_1$ distance to other structural observables to see how \methodname improves alignment to held-out observables. To measure \methodname's ability to recover the underlying full state distribution from the cryo-EM images, we also measure the maximum Kabsch-aligned RMSD to the experimental amino acid positions. We provide further details in \Cref{apx:exp_details_protein}.

\paragraph{Results.} \Cref{fig:rg_shift_trp} and \Cref{tab:proten_alignment} show how \methodname aligns the base generative model to the distribution of experimental protein structures using the cryo-EM image observable. In line with the molecular experiments, \methodname decreases the Wasserstein distance on other, held-out observables by up to $86\%$. Beyond observables, \methodname also reduces the maximum Kabsch-aligned RMSD to the experimental structures, indicating fewer extreme deviations in the underlying amino acid positions. Even when we decrease the signal-to-noise ratio (SNR) for the cryo-EM observable, \methodname improves the base distribution's alignment with the experimental data. 

This experiment underscores the importance of \methodname's full \textit{distributional} alignment; simply matching the expectation of the high-dimensional cryo-EM observable, due to the low SNR, would yield an observable distribution matching the average noise. Similarly, imposing a sufficient number of moment constraints for the $128\times128=16,384$ dimensional distribution would be impractical, as shown by the synthetic experiment in \Cref{exp:synthetic}. Appropriately scaled, this experiment demonstrates the potential of \methodname to leverage a large dataset of noisy experimental measurements to bridge the simulation-to-experiment gap.  

\begin{table}[t]
\centering
\caption{
\textbf{\methodname aligns a generative model trained on simulated data from a classical force field with experimental protein structures from the PDB using, noisy, high-dimensional cryo-EM images.} We pre-train a generative model on samples of fast-folding proteins obtained from molecular dynamics simulations using a classical force field \citep{Majewski2023_torch_md_thermo}. Using only the noisy cryo-EM images, \methodname improves alignment with the underlying experimental distribution of structures, as measured by the Wasserstein distance to other, held-out observables. The maximum Kabsch-aligned RMSD to the experimental structures also decreases, indicating better distributional alignment at level of amino acid positions.
}
\small
\setlength{\tabcolsep}{3pt}
\renewcommand{\arraystretch}{1.15}

\begin{tabular}{lccl}
\toprule
{} &
\multicolumn{2}{c}{\textbf{$W_1$ to Experiment ($\downarrow$)}} & \textbf{Max RMSD}\\
&
\textbf{MeanDist}&
$\mathbf{R_g}$ & \textbf{(nm, $\downarrow$)}\\ \midrule

\textbf{Trp-Cage Base} &
1.26&
1.07 & 1.18\\

\methodname (SNR 1.0)&
0.17&
0.21 & 0.91\\
 \methodname (SNR 0.1)& 0.28&0.36 & 0.96\\ \hline

\textbf{BBL Base} &
1.36&
1.08 & 1.76\\
 \methodname (SNR 1.0)& 0.21&0.18 & 1.38\\

\methodname (SNR 0.1)&
0.27&
0.33 & 1.36\\\bottomrule
\end{tabular}

\vspace{1mm}
\label{tab:proten_alignment}
\end{table}

\section{Conclusion}
In this work, we introduced \methodname, an iterative algorithm for bridging the gap between simulation and experiment by alternating between distinguishing low- and high-fidelity samples and refining a generative model accordingly. We provided both theoretical guarantees and empirical evidence that \methodname can align a classical force field with experimental measurements, and that its performance improves as additional observables are incorporated, even when these observables are correlated.

While our experiments focus on observables derived from a single state, the proposed objective naturally extends to dynamic observables, such as an autocorrelation function, presenting an important direction for future work. Moreover, although we evaluate \methodname on widely used benchmarks, these datasets remain a simplified proxy for the scale and complexity encountered at the frontier of biomolecular research. Real experimental data are also often noisy and expensive to obtain, posing additional challenges that must be addressed to enable broader deployment.

At the same time, the observation that \methodname improves with an increasing number of observables suggests that scaling this approach with more computational resources and experimental datasets could further improve performance. We also hope that the development of algorithms capable of effectively leveraging experimental measurements at scale will motivate the creation of more standardized and widely accessible experimental datasets. Finally, while our experiments are grounded in computational chemistry, we emphasize that \methodname is not domain-specific and provides a general framework for learning from correlated partial observations. By combining structured simulated priors with real-world measurements, \methodname represents a step toward models that more faithfully align with the real world.

\paragraph*{\textbf{Acknowledgments.}} We thank James Bowden and Sanjeev Raja for feedback on the manuscript. We thank Nithin Chalapathi for feedback on the cryo-EM experiments and Michael Psenka for feedback on the proofs. This research was partly supported by Laboratory Directed Research and Development (LDRD) funding under the Department of Energy Contract Number DE-AC02-05CH11231, DARPA TIAMAT, and the Toyota
Research Institute as part of the Synthesis Advanced Research Challenge.

\bibliography{references}%

\newpage
\appendix
\onecolumngrid
\section{Experimental Details}
\label{apx:exp_details}

We provide further details on our experiments. Unless otherwise specified, we use a Transformer encoder \citep{vaswani2023attentionneed} for the generative model and an MLP for our discriminators. We apply rotation augmentation in all our experiments. We used Adam \citep{kingma2017adammethodstochasticoptimization} for all of our experiments and normalized both the molecular and observable inputs. We also found that using a gradient penalty to handle the Lipschitz constraint to be more flexible than fixing a spectral norm. We discuss individual experiments below.     

\subsection{Synthetic Mixture-of-Gaussians}

We visualize the synthetic mixture-of-Gaussians benchmark in \Cref{fig:mix_guassians}. Although this is a simple distribution in three dimensions, EA struggles to align to the target distribution (see \Cref{exp:synthetic} and \Cref{tab:alignment_metrics}).

\begin{figure}[ht]
    \centering
    \includegraphics[width=0.5\linewidth]{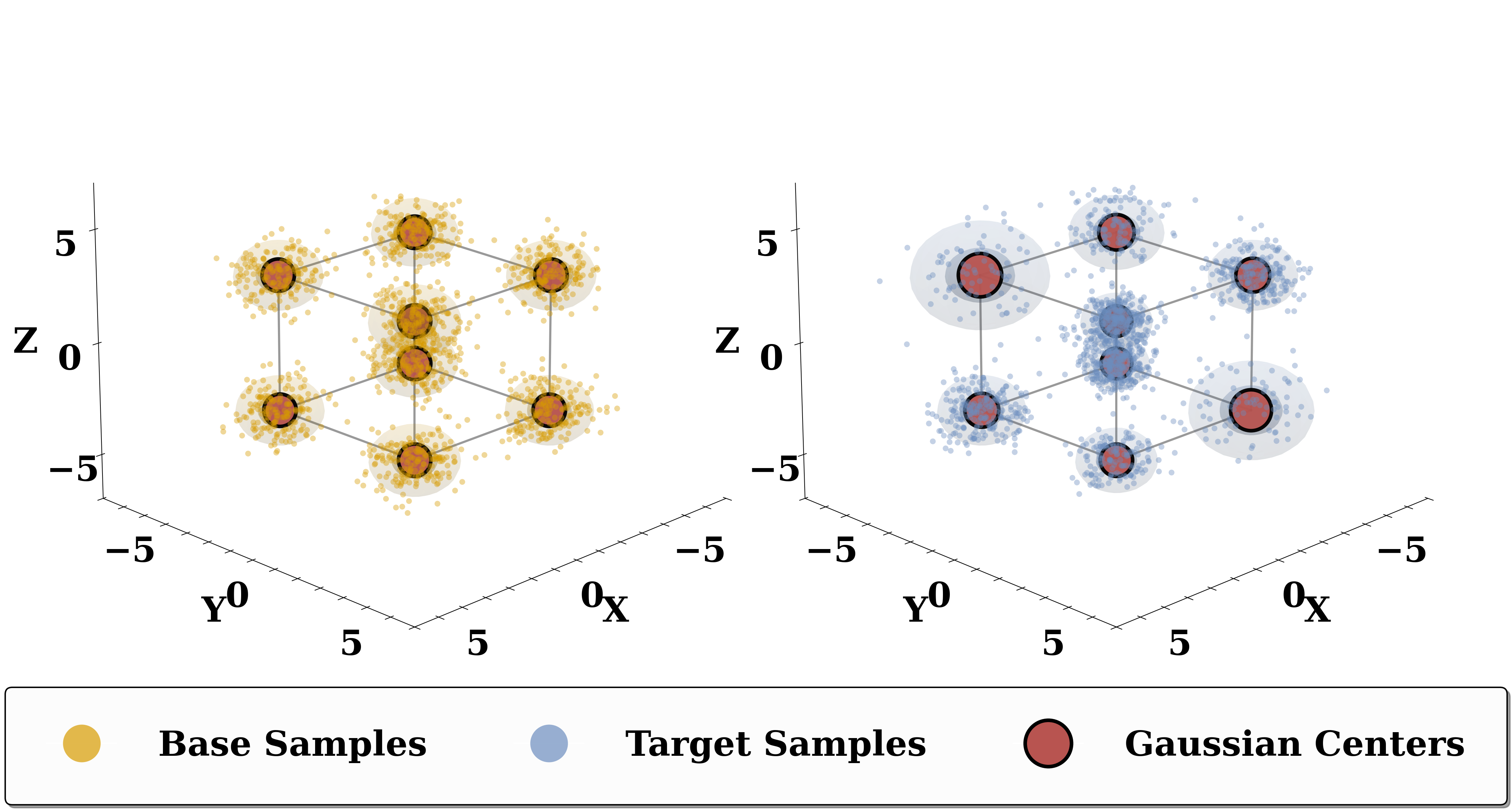}
    \caption{\textbf{Synthetic mixture-of-Gaussians benchmark.} The base distribution (left) is an isotropic mixture-of-Gaussians with eight components located at the vertices of a cube, with uniform mixture weights. We define the target distribution (right) by perturbing the variances and mixture weights for the Gaussians (for example, larger variance on top-left to bottom-right diagonal).}
    \label{fig:mix_guassians}
\end{figure}

\paragraph{Dataset Parameters}
The synthetic benchmark consists of two three-dimensional mixture-of-Gaussians distributions, both with eight components centered at the vertices of a cube with side length 6.0~\AA. We summarize the key parameters in \Cref{tab:synthetic_params}.

\begin{table}[ht]
\centering
\caption{\textbf{Synthetic mixture-of-Gaussians parameters.} Both distributions have eight Gaussian components positioned at the cube vertices $\{(\pm 3, \pm 3, \pm 3)\}$~\AA. The base distribution is uniform, while the target distribution has heterogeneous variances and mixture weights.}
\label{tab:synthetic_params}
\setlength{\tabcolsep}{8pt}
\begin{tabular}{lcc}
\toprule
\textbf{Parameter} & \textbf{Base (Uniform)} & \textbf{Target (Reweighted)} \\
\midrule
Cube Side Length & 6.0~\AA & 6.0~\AA \\
Number of Components & 8 & 8 \\
Component Positions & Cube vertices & Cube vertices \\
\midrule
\textbf{Mixture Weights:} & & \\
Diagonal Pair 1 & 0.25 & 0.40 \\
Diagonal Pair 2 & 0.25 & 0.30 \\
Diagonal Pair 3 & 0.25 & 0.10 \\
Diagonal Pair 4 & 0.25 & 0.20 \\
\midrule
\textbf{Variances} ($\sigma^2$, \AA$^2$): & & \\
Vertex 1: $(+3,+3,+3)$ & 0.5 & 0.3 \\
Vertex 2: $(+3,+3,-3)$ & 0.5 & 0.5 \\
Vertex 3: $(+3,-3,+3)$ & 0.5 & 1.0 \\
Vertex 4: $(+3,-3,-3)$ & 0.5 & 0.4 \\
Vertex 5: $(-3,+3,+3)$ & 0.5 & 0.6 \\
Vertex 6: $(-3,+3,-3)$ & 0.5 & 1.2 \\
Vertex 7: $(-3,-3,+3)$ & 0.5 & 0.5 \\
Vertex 8: $(-3,-3,-3)$ & 0.5 & 0.3 \\
\bottomrule
\end{tabular}
\end{table}

\begin{table}[ht]
\centering
\caption{\textbf{Distributional alignment metrics on mixture of Gaussians.} We compare alignment using \methodname against EA style approaches using an observable set of 2-dimensional projection maps. Up until the fourth order, moment-based matching is unable to recover the distribution empirically.}
\label{tab:alignment_metrics}
\setlength{\tabcolsep}{4pt}
\begin{tabular}{lcc}
\toprule
Method & Energy W2 $\downarrow$ & Cluster KL $\downarrow$ \\
\midrule
Base
& 1.10 & 0.12 \\

\addlinespace[2pt]
\midrule
\textbf{\methodname (ours)}
& \textbf{0.05} & \textbf{0.01} \\
\midrule

EA (1st) & 1.10 & 0.12 \\
EA (2nd) & 0.90 & 0.02 \\
EA (3rd) & 0.90 & 0.02 \\
EA (4th) & 0.70 & 0.05 \\
\bottomrule

\end{tabular}
\end{table}

\subsection{Small Molecules}
\label{apx:observables}

\begin{figure}
    \centering
    \includegraphics[width=0.35\linewidth]{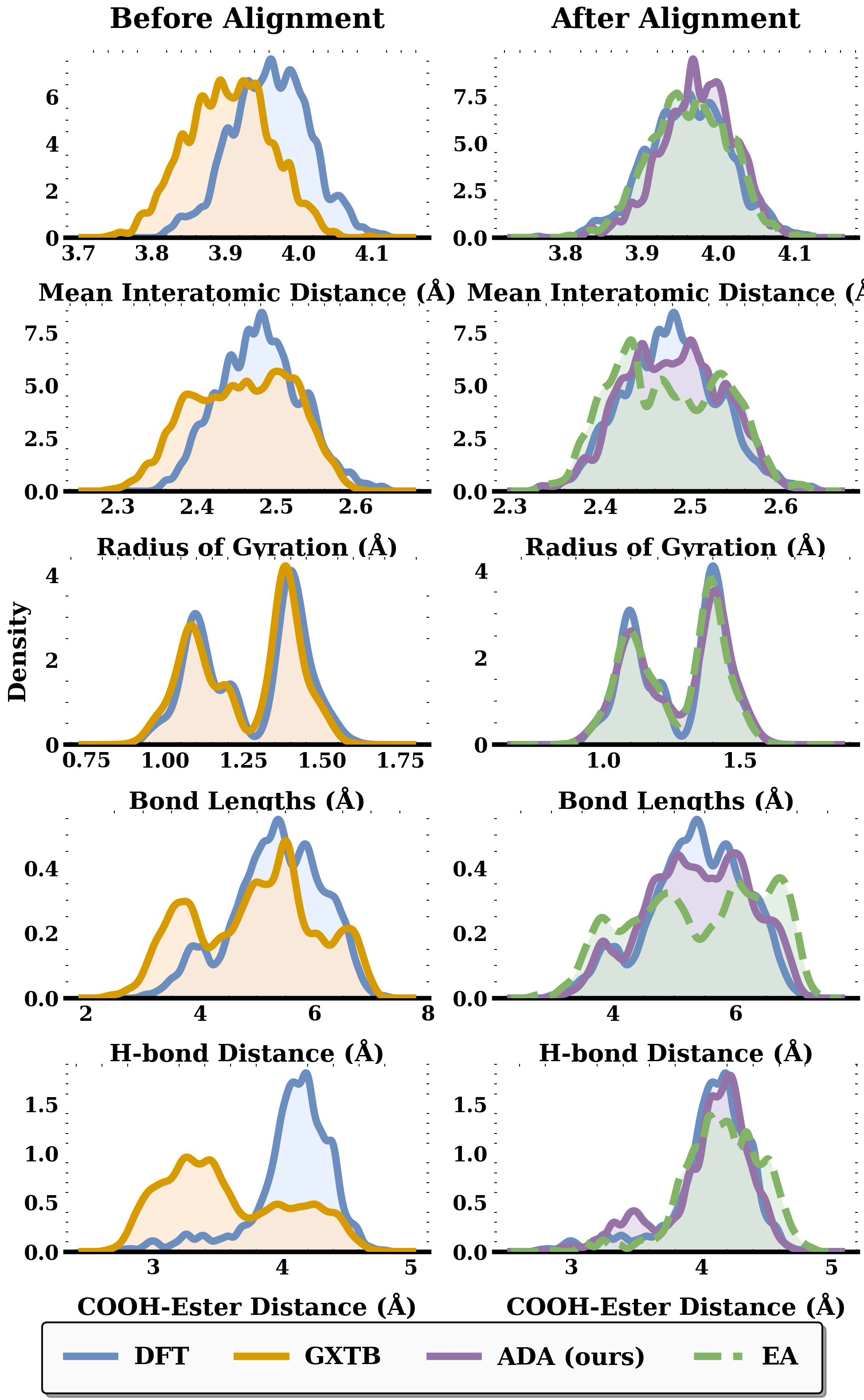}
    \caption{\textbf{Comparison of expectation alignment (EA) and \methodname across all evaluated observables on the MD17 aspirin system.} While EA aligns first-order statistics, \methodname more accurately captures higher-order structure, yielding closer agreement with the full target observable distributions.}
    \label{fig:experiment2_hist}
\end{figure}

In this work, we consider the aspirin system (C$_9$H$_8$O$_4$) composed of 21 atoms, with DFT data taken from the MD17 dataset \cite{md17}. To characterize the conformational landscape and hydrogen bonding dynamics of aspirin, we compute the following set of observables:

\paragraph{Radius of Gyration} The radius of gyration $R_g$ quantifies the spatial extent of the molecular structure and is defined as:
\begin{equation}
R_g^2 = \frac{1}{M} \sum_{i=1}^{N} m_i \|\mathbf{r}_i - \mathbf{r}_{\text{COM}}\|^2
\end{equation}
where $m_i$ and $\mathbf{r}_i$ are the mass and position of atom $i$, $N=21$ is the total number of atoms, $M = \sum_{i=1}^{N} m_i$ is the total mass, and $\mathbf{r}_{\text{COM}}$ is the center of mass. This observable captures the overall compactness of the molecule, with lower values indicating more compact conformations and higher values indicating more extended structures. Units: $\AA$.

\paragraph{Mean Interatomic Distance} The mean interatomic distance $\langle d \rangle$ provides a global measure of the molecular size:
\begin{equation}
\langle d \rangle = \frac{2}{N(N-1)} \sum_{i=1}^{N-1} \sum_{j=i+1}^{N} \|\mathbf{r}_i - \mathbf{r}_j\|
\end{equation}
This quantity averages over all pairwise atomic distances and serves as a complementary measure to $R_g$, being sensitive to the overall distribution of atoms rather than their distance from the center of mass. Units: \AA.

\paragraph{Bond Lengths} We monitor all covalent bond lengths $d_{ij}$ between bonded atoms $i$ and $j$:
\begin{equation}
d_{ij} = \|\mathbf{r}_i - \mathbf{r}_j\|
\end{equation}
These observables track local geometric distortions and vibrational modes within the molecular framework. For aspirin, we focus particularly on the bonds within the carboxylic acid (COOH) and ester functional groups. Units: \AA.

\paragraph{Hydrogen Bond Distance} The intramolecular hydrogen bond distance $d_{\text{HB}}$ is defined as the distance between the hydroxyl hydrogen of the carboxylic acid group and the carbonyl oxygen of the ester group:
\begin{equation}
d_{\text{HB}} = \|\mathbf{r}_{\text{H(COOH)}} - \mathbf{r}_{\text{O(ester)}}\|
\end{equation}
This observable is crucial for characterizing the intramolecular hydrogen bonding that stabilizes certain aspirin conformers. Shorter distances indicate stronger hydrogen bonding interactions. Units: \AA.

\paragraph{COOH-Ester Distance} The COOH-ester distance $d_{\text{CE}}$ measures the separation between the center of mass of the carboxylic acid group and the center of mass of the ester group:
\begin{equation}
d_{\text{CE}} = \|\mathbf{r}_{\text{COM}}^{\text{COOH}} - \mathbf{r}_{\text{COM}}^{\text{ester}}\|
\end{equation}
This collective variable captures the relative orientation and proximity of the two functional groups, which is directly related to the molecule's ability to form the intramolecular hydrogen bond. Units: \AA.

\subsection{Free Energy Surfaces}

To explore the conformational landscape of aspirin, we compute two-dimensional free energy surfaces (FES) as functions of pairs of these observables:

\begin{itemize}
\item \textbf{FES A}: $F(d_{\text{HB}}, R_g^2)$ -- This surface relates hydrogen bonding strength to molecular compactness, revealing how intramolecular hydrogen bonds affect the overall molecular geometry.

\item \textbf{FES B}: $F(d_{\text{CE}}, d_{\text{HB}})$ -- This surface directly connects the functional group separation to the hydrogen bond distance, capturing the geometric requirements for hydrogen bond formation.

\item \textbf{FES C}: $F(\xi_1, \xi_2)$ -- This surface explores the functional group geometry using appropriate collective variables that characterize the relative orientation and configuration of the COOH and ester moieties.
\end{itemize}

We take the Jensen-Shannon divergence of the free energy surfaces by a standard binning procedure of the 2-dimensional distributions.

\subsection{Proteins}
\label{apx:exp_details_protein}
In this work, we considered two proteins: Trp-Cage and BBL with 20 and 47 amino acids, respectively. We used the same experimental structures, derived by NMR, used by \citet{Majewski2023_torch_md_thermo} with the following PDB IDs: 2JOF for Trp-cage (29 structures) and 2WXC for BBL (21 structures). We first pre-trained the base model using 100,000 samples of each of the proteins obtained by running MD simulations with a classical force field \citep{Majewski2023_torch_md_thermo}. We then used \methodname individually on each protein to match the experimental distribution of structures by using cryo-EM images as observables. We note that the simulation data was coarse-grained using the alpha-carbon \citep{Majewski2023_torch_md_thermo}, so we applied the same alpha-carbon coarse-graining to the experimental data.

We simulate the cryo-EM images to provide precise control over the image formation, allowing us to evaluate \methodname's robustness to noise. We build off the code provided by \citet{Zhong2021_cryodrgn} to generate $128 \times 128$ cryo-EM images. We use a standard deviation of $2.5$ for the Gaussian kernels for volume generation, an amplitude of $0.1$, a defocus between $0.8$ and $2.5$ $\mu$m , a b-factor between $1$ and $100$Å, and a pixel size of $0.3$Å. We also run experiments with a signal-to-noise ratio (SNR) of $1.0$ and $0.1$. We generate a new cryo-EM image each batch, mimicking a real cryo-EM imaging setup where there might be multiple different images of similar underlying structures. See \Cref{fig:cryo_images} for example images.

Note that for this experiment, we use a convolutional neural network (CNN) for the discriminator. The convolutional encoder consists of 3 convolutional blocks, each containing a convolution (kernel size 3, padding 1) and max pooling (stride 2). The channels are $1,32,64,128$. An adaptive average pooling layer reduces spatial dimensions to a fixed size of $2\times2$, followed by a linear projection. We report further hyperparameters in \Cref{tab:hyperparameters}.

\begin{figure}
    \centering
    \includegraphics[width=0.9\linewidth]{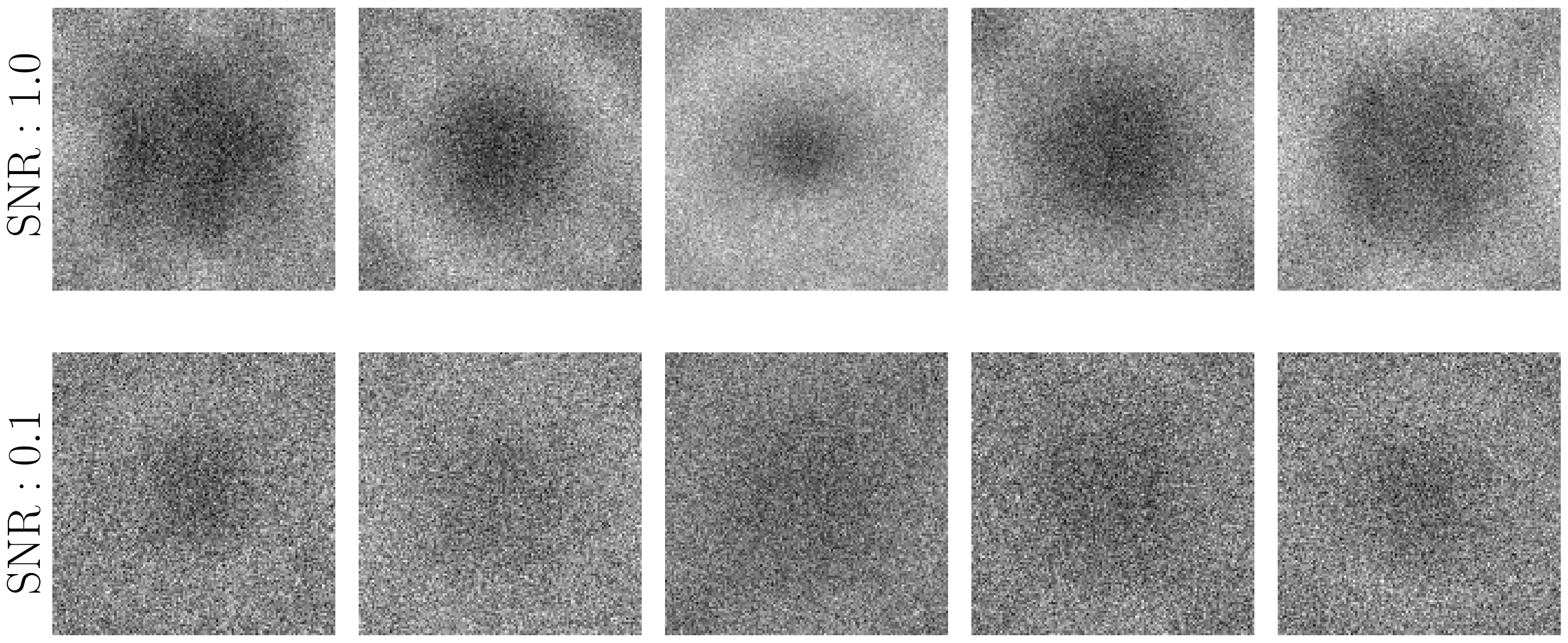}
    \caption{\textbf{Example cryo-EM images of Trp-cage used by \methodname to align with the experimental distribution of protein structures.}}
    \label{fig:cryo_images}
\end{figure}

\begin{table}[]
\centering
\caption{\textbf{Experiment Hyperparameters.} Note that discriminator steps refers to the number of  gradient steps taken between each update of the generative model, and the discriminator weight refers to the weighting of the discriminator term relative to the KL term in the generative model update step.} 
\begin{tabular}{llll}
\toprule
\textbf{}                        & \multicolumn{3}{c}{\textbf{Value}}                                                    \\
\textbf{Hyperparameter}          & \multicolumn{1}{c}{\textbf{Synthetic}} & \textbf{Small Molecules} & \textbf{Proteins} \\ \hline
\textbf{Base Model:}             &                                        &                          &                   \\
Layers                           &                    MLP(512, 512, 512, 512)                    &              8            & 8                 \\
Heads                            &                     N/A                   &             12             & 12                \\
Hidden Dimension                 &                      N/A                  &           384               & 384               \\
Number of Euler Steps (sampling) &                       N/A                 &              45            & 35                \\ \hline
\textbf{Discriminator Model:}             &                                        &                          &                   \\
Hidden Dimension                 &                      512                  &           512               & (CNN, see \ref{apx:exp_details_protein})\\
Layers                           &                      2                  &             2             & (CNN, see \ref{apx:exp_details_protein})\\ \hline
\textbf{\methodname:}                 &                                        &                          &                   \\
Discriminator Weight&     128.0                                   &      4096.0                         &                   512\\
Discriminator Steps              &       1                                 &           1               &                   3\\
Gradient Penalty Weight          &                     1000                     &              1000              & $1\times10^6$\\ \hline
\textbf{Training:}               &                                        &                          &                   \\
Generative Model LR (Pre-training, fine-tuning)              &                        $1\times10^{-3}, 1\times10^{-5}$  &         $1\times10^{-3}, 1\times10^{-5}$                 & $1\times10^{-3}, 1\times10^{-5}$\\
Discriminator LR                 &                       $1\times10^{-3}$                  &               $1\times10^{-3}$            & $1\times10^{-3}$  \\
Weight Decay (Pre-training, fine-tuning)                      &                 $1\times10^{-4}, 1\times10^{-8}$                        &       $1\times10^{-4}, 1\times10^{-8}$                    & $1\times10^{-4}, 1\times10^{-8}$  \\
\# Gradient Steps (Pre-training, fine-tuning)                &          10,000, 10,000        &     500,000, 10,000  & 300,000, 1,000\\
Batch Size                       &    1024                                    &      64                    & 128               \\ \bottomrule
\end{tabular}
\label{tab:hyperparameters}
\end{table}

\subsection{Computational Resources}

All experiments were run on a single GPU. We used both A6000s and H100s to run our experiments. Training the base models and applying \methodname took less than 24 GPU hours across all of our experiments. Interestingly, performance kept improving with \methodname as the models kept training longer, especially with more observables and more complicated systems. This suggests that \methodname could benefit and take advantage of further computation on larger experimental datasets. 

\section{Proofs}
We provide the full proofs of the theorems presented in the main text.

\subsection{Proof of \cref{thm:saddle-point}}
\label{subsec:saddle_point_prof}
\begin{theorem}[Existence and uniqueness of the saddle point]
\label{thm:saddle-point}
Assume $X$ is a compact Polish space and that $\mb$ has full support on $X$. Additionally, allow $o^{(i)}$ to be continuous for $i \in I$. For any $\beta \in \mathbb{R}$, \cref{eqn:minimax} admits a saddle point $(\mu^*, (f^{(i)})^*)$, where $\mu^*$ is unique. Consequently, the recovered distribution is well defined. We denote this solution by $\mu^*\in \solL{}$.
\end{theorem}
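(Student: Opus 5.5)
We plan to apply Sion's minimax theorem to the Lagrangian $\mathcal{L}(\mu,(f^{(i)}),\beta)$ of \cref{eqn:minimax}. The first player ranges over $\mathcal{P}(X)$, restricted where needed to measures with $D_{\mathrm{KL}}(\mu\|\mb)<\infty$. The second ranges over $\mathrm{Lip}(1)^{|I|}$, where each $f^{(i)}$ is defined on the compact set $o^{(i)}(X)\subseteq\mathbb{R}^{k_i}$ and normalized by $f^{(i)}(o_0^{(i)})=0$ at a fixed base point. The normalization is harmless, because $\mathcal{L}$ is invariant under adding constants to $f^{(i)}$.

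\textbf{Step 1: topological and convexity structure.} Since $X$ is compact Polish, $\mathcal{P}(X)$ is convex and compact in the weak* topology by Prokhorov/Riesz. By continuity of $o^{(i)}$ and compactness of $X$, each image $o^{(i)}(X)$ is compact. The normalized $1$-Lipschitz functions on it form a convex set that is uniformly bounded and equicontinuous, so by Arzel\`a--Ascoli it is compact in the sup norm. Both feasible sets are therefore convex, and at least one (indeed both) is compact.

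\textbf{Step 2: semicontinuity and convex--concave structure.} For fixed $(f^{(i)})$, the map $\mu\mapsto \mathbb{E}_{o^{(i)}_\#\mu}[f^{(i)}]=\int f^{(i)}\circ o^{(i)}\,d\mu$ is weak*-continuous and linear, because $f^{(i)}\circ o^{(i)}\in C(X)$. The map $\mu\mapsto D_{\mathrm{KL}}(\mu\|\mb)$ is convex and weak* lower semicontinuous by the Donsker--Varadhan variational formula. Hence $\mu\mapsto\mathcal{L}$ is concave and upper semicontinuous. For fixed $\mu$, the map $(f^{(i)})\mapsto\mathcal{L}$ is affine, and it is continuous in sup norm since $|\int g\,d\mu|\le\|g\|_\infty$. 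Sion's theorem then gives $\max_\mu\min_f\mathcal{L}=\min_f\max_\mu\mathcal{L}$.

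\textbf{Step 3: attainment of the saddle point.} The function $\mu\mapsto\min_f\mathcal{L}(\mu,f)=-D_{\mathrm{KL}}(\mu\|\mb)-\beta\sum_i d^{(i)}(\mu,\nu)$ is an infimum of u.s.c.\ functions, so it is u.s.c. It therefore attains its maximum on the compact set $\mathcal{P}(X)$. The value is finite, since $\mu=\mb$ gives a finite value: each $d^{(i)}$ is bounded by the diameter of $o^{(i)}(X)$. Likewise $f\mapsto\max_\mu\mathcal{L}(\mu,f)$ is a supremum of continuous functions, hence l.s.c., and attains its minimum on the compact Lipschitz set. Combining these with the equality of values from Step 2 yields a saddle point $(\mu^*,(f^{(i)})^*)$.

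\textbf{Step 4: uniqueness of $\mu^*$ (the delicate step).} Every saddle-point first component maximizes $\Phi(\mu)=-D_{\mathrm{KL}}(\mu\|\mb)-\beta\sum_i d^{(i)}(\mu,\nu)$. For $\beta\ge0$, each $d^{(i)}(\cdot,\nu)$ is a supremum of linear functionals, hence convex. Since $D_{\mathrm{KL}}(\cdot\|\mb)$ is strictly convex on its finite domain, $\Phi$ is strictly concave there and its maximizer is unique. Strict convexity of KL follows from strict convexity of $t\log t$ applied to densities w.r.t.\ $\mb$, and full support of $\mb$ ensures these densities are well defined on all of $X$.

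The main obstacle is the case $\beta<0$, which the statement formally allows. There $-\beta\,d^{(i)}$ is convex rather than concave, so both strict concavity and the u.s.c./Sion structure can fail. I would first check whether $\beta$ should be restricted to $\mathbb{R}^+$, as in \cref{thm:wasserstein-convergence}. Otherwise I would try to handle $\beta<0$ with a direct argument: exchange the roles of the players and use the Gibbs form $\mu^*\propto\mb\,e^{\beta\sum_i f^{(i)}\circ o^{(i)}}$ for a fixed critic. This form shows that $\mu$'s best response to any fixed $f$ is unique, since it is a tilted Gibbs measure, and that uniqueness transfers to the saddle.
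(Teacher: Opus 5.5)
Your argument follows the paper's route: Sion's theorem on $\mathcal{P}(X)\times\prod_i \mathrm{Lip}(1)$, using weak* compactness of $\mathcal{P}(X)$, concavity and weak* upper semicontinuity of $-D_{\mathrm{KL}}(\cdot\|\mb)$, affinity and sup-norm continuity in the critics, and strict concavity for uniqueness.

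Where you differ, you are more careful than the paper.
\begin{itemize}
\item The paper uses only the compactness of $\mathcal{P}_1$. It obtains $\max_\mu\inf_f\mathcal{L}=\inf_f\max_\mu\mathcal{L}$ and a maximizer $\mu^*$, but never shows that the infimum over critics is attained, so it never actually produces $(f^{(i)})^*$. Your base-point normalization plus Arzel\`a--Ascoli supplies this.
\item That same compactness lets you run Sion with the compact set on the critic side. This sidesteps the fact that $\mathcal{L}=-\infty$ wherever $D_{\mathrm{KL}}(\mu\|\mb)=\infty$, which matters because Sion's theorem is stated for real-valued functions.
\item The paper attributes uniqueness directly to strict concavity of $-D_{\mathrm{KL}}$. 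You correctly note that what must be strictly concave is $\Phi=\min_f\mathcal{L}$, which also needs convexity of $d^{(i)}(\cdot,\nu)$.
\end{itemize}

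The one thing to fix is your treatment of $\beta<0$, which is not an obstacle. As written, your proposal leaves the $\beta<0$ half of the statement unproved on the basis of a mistaken diagnosis.

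First, the Sion hypotheses do not depend on the sign of $\beta$. For fixed critics, the $\beta$-term is linear and weak*-continuous in $\mu$; for fixed $\mu$, it is affine and continuous in the critics. Both hold for either sign.

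Second, your normalized class $\mathrm{Lip}(1)$ is invariant under $f\mapsto -f$. Hence
\[
\min_{(f^{(i)})}\mathcal{L}\bigl(\mu,(f^{(i)})\bigr)=-D_{\mathrm{KL}}(\mu\|\mb)-|\beta|\sum_{i\in I}d^{(i)}(\mu,\nu),
\]
and your Step 3 formula with $-\beta$ is valid only for $\beta\ge 0$. With $|\beta|$ in place of $\beta$, $\Phi$ is strictly concave on its finite-KL domain for every $\beta\in\mathbb{R}$, and your Steps 1--4 go through verbatim.

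Your Gibbs-measure fallback would also work for all $\beta$, but only after adding the interchangeability of saddle points. If $(\mu_1,f_1)$ and $(\mu_2,f_2)$ are saddle points, then so is $(\mu_1,f_2)$. So $\mu_1$ and $\mu_2$ both maximize $\mathcal{L}(\cdot,f_2)$, which is strictly concave for any sign of $\beta$, and therefore they coincide. As a minor aside, full support of $\mb$ plays no role in the strict convexity of the KL term, since densities with respect to $\mb$ only need to be defined $\mb$-almost everywhere.
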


\begin{proof}

\textit{The proof is straightforward Sion's theorem. We start by showing that the space of probability measures is compact with respect to the weak* topology. We then note that Lipschitz functions are a convex subset of the continuous functions, equipped with the supremum norm. With this, both our probability measures (generators) and functions (discriminators) live on convex subsets of linear topological vectors spaces. We then use basic facts about the continuity of our Lagrangian to justify the existence of a unique saddle point $\mu^*$.} 

\textit{We first equip the space of probability measures with the weak* topology; allowing $\mu \mapsto\braket{\mu, f} = \int f \; d\mu: \mathcal P \rightarrow \mathbb R$ to be continuous; resulting in $\mathcal P_1$ being a compact convex subset of a linear topological vector space with respect to the weak* topology.}

Let $\mathcal{P} = \mathcal{M}(X)$ denote the space of Radon measures on $(X,\mathcal{F})$, and let $\mathcal{P}_1 \subset \mathcal{P}$ be the subset of probability measures. 

Allow $C_0(X)$ to be the set of continuous functions on $X$ vanishing at infinity. As $X$ is compact we have that $C_0(X) = C(X)$; the set of all continuous functions.

We equip $\mathcal P$ with the weak* topology induced by $C(X)^*$, i.e., the coarsest topology such that for all $f \in C(X)$,
\[
\mu \mapsto \langle \mu, f \rangle := \int f \, d\mu,
\]
is continuous. We have that $\mathcal{P}$ is a linear topological vector space, and $\mathcal{P}_1$ is a convex subset of $\mathcal P$.

We note that probability measures are known to be compact with respect to the weak*. $C_0(X)$ is a Banach space, thus the closed TV unit ball $B_1  = \{ \mu : \|\mu\|_{TV} \leq 1\}$ is compact in weak*. As probability measures are closed in weak*, and $\mathcal P_1 \subset B_1$ we get compactness of $\mathcal P_1$ in weak*.

\textit{We further define the set of our rewards $\mathcal F^m$ such that it is a convex subset of a linear topological vector space.} 

Allow $f^{(i)} \in \mathrm{LIP}_1(O^{i}): O^{(i)} \rightarrow \mathbb{R}$ to be the discriminator associated with observable $i$. By considering the set of continuous functions $C(O^{(i)})$ under the supremum norm, we find that $\mathcal F_i \subset C(O^{(i)})$ is a convex subset of a linear topological vector space.

In our setup, we consider a vector of finite discriminators, one per observable. We denote this by: $\mathcal F^m = \prod_{i \in I} \mathcal (F_i)$, which is additionally a convex subspace of $\prod_{i \in I}  C(O^{(i)}) $.

Thus $\mathcal F^m \subset \prod_{i \in I}  C(O^{(i)}) $ is a convex subset linear topological vector space. 

\textit{We now finish, by showing that we can invoke Sion's theorem using the compactness of $\mathcal P_1$ , yielding the existence of a saddle point which is unique in $\mu^*$.}

The Lagrangian is given by

\[
\mathcal{L}(\mu,(f^{(i)}))
= \beta
\sum_{i\in I} \mathbb{E}_{o^{(i)}_\#((\mu-\nu))}[f^{(i)}]
-
D_{\mathrm{KL}}(\mu\|\mu_{base}).
\]

\textit{First, we demonstrate that $\mu \mapsto \mathcal{L}(\mu,(f^{(i)}))$ is upper semi-continuous in $\mu$, and strictly concave.}

The term $-D_{\mathrm{KL}}(\mu\|\mb)$ is upper semi-continuous in the weak* topology and strictly concave in $\mu$.

Likewise, for fixed $(f^{(i)})$, the mapping $\mu \mapsto \sum_{i\in I} \mathbb{E}_{o^{(i)}_\#((\mu-\nu))}[f^{(i)}]$ is continuous in the weak* topology, as we have a finite sum of continuous functions.

\textit{Next, we demonstrate that $(f^{(i)}) \mapsto \mathcal{L}(\mu,(f^{(i)}))$ is lower semi-continuous in $(f^{(i)})$, and convex.}

For fixed $(\mu-\nu)$:
\[
(f^{(i)}) \mapsto \sum_{i\in I} \mathbb{E}_{o^{(i)}_\#(\mu-\nu)}[f^{(i)}]
\]
is continuous on $\mathcal{F}^m$ with respect to the product topology. 

This is because $\pi_i : (f(i)) \mapsto f^{(i)}$ is continuous and integration with respect to a finite measure $f(i) \mapsto \braket{\gamma, f(i)}$ is continuous, making  $\sum_{i\in I} \mathbb{E}_{o^{(i)}_\#((\mu-\nu))}[f^{(i)}] = [\sum_{i\in I} \braket{o^{(i)}_\#((\mu-\nu)), \pi_i(\cdot)}] (f^{(i)})$ continuous in $(f^{(i)})$.

Hence, we achieve lower semi-continuity in $(f^{(i)})$ trivially.

Convexity comes naturally from linearity.

\textit{We now invoke Sion's theorem, as we have proved the constituent lemmas.}

Therefore, $\mathcal{L}$ is concave and upper semi-continuous in $\mu$, and convex and lower semi-continuous in $(f^{(i)})$. 

Sion’s theorem applies, yielding
\[
\max_{\mu} \inf_{(f^{(i)})} \mathcal{L}(\mu,(f^{(i)}))
=
\inf_{(f^{(i)})} \max_{\mu} \mathcal{L}(\mu,(f^{(i)})).
\]
As $\mathcal P_1$ is compact, we get attainment of a maximizer $\mu^*$. Uniqueness of $\mu^*$ follows from the strict concavity of $-D_{\mathrm{KL}}(\mu\|\mb)$ in $\mu$.

\end{proof}

\subsection{Asymptotic Convergence in Wasserstein}
\label{subsec:asymptotic_converence}

\begin{theorem}[Asymptotic convergence in Wasserstein]
\label{thm:wasserstein-convergence}
Assume $D_{\mathrm{KL}}(\nu \| \mu_{base}) < \infty$. Let $(\mu^*_\beta, (f^{(i)}_\beta)^*) \in \solL{}$ for $\beta \in \mathbb{R}^+$. Then, as $\beta \to \infty$, we have
\[
d^{(i)}(\mu^*_\beta, \nu) \to 0
\quad \text{for all } i \in I.
\]
\end{theorem}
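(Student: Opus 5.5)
The plan is to compare the optimal value of the soft objective \cref{eqn:soft} at $\mu^*_\beta$ with its value at the feasible competitor $\nu$. By \cref{thm:saddle-point} and Sion's minimax equality, the inner minimization over $(f^{(i)})$ in \cref{eqn:minimax} evaluates to $-\beta \sum_i d^{(i)}(\mu,\nu)$. Here the $\mathrm{Lips}(1)$ class is symmetric under $f\mapsto -f$, so the infimum of $\beta(\mathbb{E}_{o_\#\mu}f - \mathbb{E}_{o_\#\nu}f)$ is exactly $-\beta d^{(i)}$. Hence $\mu^*_\beta$ maximizes
\[
J_\beta(\mu) = -D_{\mathrm{KL}}(\mu\,\|\,\mb) - \beta \sum_{i\in I} d^{(i)}(\mu,\nu)
\]
over $\mathcal P_1$. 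I will record this identification first, since the rest of the argument only uses the primal form $J_\beta$.

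The key step is the comparison $J_\beta(\mu^*_\beta) \ge J_\beta(\nu)$. Since $d^{(i)}(\nu,\nu)=0$ for every $i$, this reads
\[
-D_{\mathrm{KL}}(\mu^*_\beta\,\|\,\mb) - \beta\sum_{i} d^{(i)}(\mu^*_\beta,\nu) \;\ge\; -D_{\mathrm{KL}}(\nu\,\|\,\mb).
\]
Using $D_{\mathrm{KL}}(\mu^*_\beta\|\mb)\ge 0$ and rearranging gives
\[
\sum_{i\in I} d^{(i)}(\mu^*_\beta,\nu) \;\le\; \frac{D_{\mathrm{KL}}(\nu\,\|\,\mb)}{\beta}.
\]
The right-hand side is finite by hypothesis and tends to $0$ as $\beta\to\infty$. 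Each $d^{(i)}\ge 0$ (take $f\equiv 0$), so each term individually tends to $0$, which is the claim. As a by-product, the same inequality gives the uniform bound $D_{\mathrm{KL}}(\mu^*_\beta\|\mb)\le D_{\mathrm{KL}}(\nu\|\mb)$, which will be useful for \cref{thm:constraint-convergence}.

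The main obstacle is the first step: justifying that the saddle point from \cref{thm:saddle-point} is genuinely a maximizer of the primal $J_\beta$, and not merely a point where the value of $\mathcal L$ is attained. To handle it I will use the saddle inequalities directly. For any $\mu$,
\[
\mathcal L(\mu,(f^{(i)})^*) \le \mathcal L(\mu^*,(f^{(i)})^*) \le \mathcal L(\mu^*,(f^{(i)})) \quad \text{for all } (f^{(i)}).
\]
Taking the infimum over $(f^{(i)})$ in the right inequality shows $\mathcal L(\mu^*,(f^{(i)})^*) = J_\beta(\mu^*)$. Specializing the left inequality to $\mu=\nu$ then gives $J_\beta(\mu^*) \ge \mathcal L(\nu,(f^{(i)})^*) = -D_{\mathrm{KL}}(\nu\|\mb)$, because the discriminator term vanishes identically at $\mu=\nu$. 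This bypasses any need to argue that $J_\beta(\nu)$ is attained. It also means the $\inf$ over $(f^{(i)})$, which may not be achieved on the non-compact Lipschitz class, never has to be realized; the only requirement is that $\nu\in\mathcal P_1$ is an admissible competitor, which it is.
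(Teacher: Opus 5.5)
Your proof is correct and is essentially the paper's argument. You compare against the feasible competitor $\nu$ via the left saddle inequality, identify $\mathcal{L}(\mu^*_\beta,(f^{(i)}_\beta)^*)$ with $-D_{\mathrm{KL}}(\mu^*_\beta\|\mb)-\beta\sum_i d^{(i)}(\mu^*_\beta,\nu)$, drop the nonnegative KL term, and conclude $\sum_i d^{(i)}(\mu^*_\beta,\nu)\le D_{\mathrm{KL}}(\nu\|\mb)/\beta$. Your explicit justification of that identification (the infimum over the right saddle inequality, plus the symmetry $f\mapsto -f$ of the Lipschitz class) fills in what the paper only asserts ``by definition of the saddle point''. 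One quibble: this step uses the saddle discriminator $(f^{(i)}_\beta)^*$ supplied by the hypothesis, so the infimum is in fact realized at $\mu^*_\beta$, and your closing claim that it never needs to be realized is overstated.
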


\begin{proof}

As $D_{\mathrm{KL}}(\nu \| \mu_{base}) < \infty$ there exists a constant $B \in \mathbb{R}$ such that
\[
D_{\mathrm{KL}}(\nu \| \mu_{base}) \le B.
\]

Since $\nu$ is feasible for the maximization problem, optimality of $\mu^*_\beta$ implies
\begin{equation}
\begin{aligned}
\mathcal{L}(\mu^*_\beta,(f^{(i)}_\beta)^*;\beta)
&\ge
\mathcal{L}(\nu,(f^{(i)}_\beta)^*;\beta) \\
&=
- D_{\mathrm{KL}}(\nu\|\mu_{base})
\ge -B .
\end{aligned}
\end{equation}

On the other hand, by definition of the saddle point,
\begin{equation}
\begin{aligned}
\mathcal{L}(\mu^*_\beta,(f^{(i)}_\beta)^*;\beta)
&=
- D_{\mathrm{KL}}(\mu^*_\beta\|\mu_{base})
- \beta \sum_{i\in I} d^{(i)}(\mu^*_\beta,\nu) \\
&\le
-\beta \sum_{i\in I} d^{(i)}(\mu^*_\beta,\nu),
\end{aligned}
\end{equation}
since the KL term is nonnegative.

Combining the two inequalities yields
\[
\beta \sum_{i\in I} d^{(i)}(\mu^*_\beta,\nu) \le B.
\]

In particular, for each $i\in I$,
\[
d^{(i)}(\mu^*_\beta,\nu) \le \frac{B}{\beta},
\]
which implies
\[
d^{(i)}(\mu^*_\beta,\nu) \to 0
\quad \text{as } \beta \to \infty.
\]

\end{proof}

\subsection{Weak Limit Converges in Constraint Set}
\label{subsec:weak_limit}

\begin{theorem}[Asymptotic convergence to the constraint set]
\label{thm:constraint-convergence}
Let $(\mu_\beta^*)_{\beta>0}$ be the sequence of saddle-point solutions from \cref{thm:wasserstein-convergence}. Suppose that $d^{(i)}(\mu_\beta^*,\nu)\to 0$ for all $i\in I$. Then any weak limit point $\bar{\mu}$ of $(\mu_\beta^*)$ satisfies
\[
(o^{(i)})_\# \bar{\mu} = (o^{(i)})_\# \nu
\quad \text{for all } i\in I,
\]
that is, $\bar{\mu}\in \mathcal{M}_o$.
\end{theorem}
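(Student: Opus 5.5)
The plan is to pass to the limit along a subsequence and combine two facts: the pushforward maps $\mu\mapsto (o^{(i)})_\#\mu$ are weakly continuous, and the Kantorovich--Rubinstein (bounded-Lipschitz) functional is weakly lower semicontinuous. Let $\bar\mu$ be a weak limit point, so there are $\beta_n\to\infty$ with $\mu_n:=\mu^*_{\beta_n}\rightharpoonup\bar\mu$, meaning $\int g\,d\mu_n\to\int g\,d\bar\mu$ for all $g\in C(X)$. By the compactness of $\mathcal P_1$ in the weak* topology established in the proof of \cref{thm:saddle-point}, at least one such limit point exists, and $\bar\mu\in\mathcal P_1$.

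\textbf{Step 1: continuity of the pushforwards.} Fix $i\in I$. For any bounded continuous $h:O^{(i)}\to\mathbb R$, the composition $h\circ o^{(i)}$ lies in $C(X)$, because $o^{(i)}$ is continuous. Hence
\[
\int h\,d(o^{(i)})_\#\mu_n=\int h\circ o^{(i)}\,d\mu_n\to\int h\circ o^{(i)}\,d\bar\mu=\int h\,d(o^{(i)})_\#\bar\mu,
\]
so $(o^{(i)})_\#\mu_n\rightharpoonup(o^{(i)})_\#\bar\mu$. Since $X$ is compact, $o^{(i)}(X)$ is a compact subset of $\mathbb R^{k_i}$ and all measures in play are supported on it. We may therefore restrict $O^{(i)}$ to this compact set.

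\textbf{Step 2: pass the Lipschitz test functions to the limit.} Fix a $1$-Lipschitz $f$ on $O^{(i)}$; it is bounded on the compact set $o^{(i)}(X)$. By Step 1,
\[
\mathbb E_{(o^{(i)})_\#\bar\mu}[f]-\mathbb E_{(o^{(i)})_\#\nu}[f]=\lim_n\Bigl(\mathbb E_{(o^{(i)})_\#\mu_n}[f]-\mathbb E_{(o^{(i)})_\#\nu}[f]\Bigr)\le\liminf_n d^{(i)}(\mu_n,\nu)=0.
\]
Applying the same bound to $-f$, which is also $1$-Lipschitz, gives the reverse inequality. Hence $\int f\,d(o^{(i)})_\#\bar\mu=\int f\,d(o^{(i)})_\#\nu$ for every $1$-Lipschitz $f$, and by scaling for every Lipschitz $f$.

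\textbf{Step 3: conclude equality of measures.} This is the only point needing care. On a metric space, bounded Lipschitz functions separate finite Borel measures. Every bounded continuous function on the compact set $o^{(i)}(X)$ is a uniform limit of Lipschitz functions, by Stone--Weierstrass or by inf-convolution. Alternatively, approximate the indicator of a closed set $F$ by $\max(0,1-k\,\mathrm{dist}(\cdot,F))$ and use outer regularity. Either route gives $(o^{(i)})_\#\bar\mu=(o^{(i)})_\#\nu$ for every $i$, hence $\bar\mu\in\mathcal M_o$.

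The main obstacle is bookkeeping rather than depth. We must make sure $\nu$ is supported in $X$, so that its pushforward also lives on the compact image. Implicitly, $D_{\mathrm{KL}}(\nu\|\mb)<\infty$ forces $\nu\ll\mb$, so $\nu$ lives on $X$. We must also make sure the weak topology used for limit points is the one generated by $C(X)$, so that Step 1 applies.
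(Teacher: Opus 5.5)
Your proof is correct and follows the same route as the paper. Continuity of $\mu\mapsto (o^{(i)})_\#\mu$ makes $(o^{(i)})_\#\bar\mu$ the weak limit of the pushforwards along $\beta_n\to\infty$, and $d^{(i)}(\mu^*_{\beta_n},\nu)\to 0$ forces that limit to be $(o^{(i)})_\#\nu$. The one difference is cosmetic: where the paper cites ``$W_1$ metrizes weak convergence'' and then uses uniqueness of weak limits, you pass each $1$-Lipschitz test function through the limit directly and conclude because Lipschitz functions separate Borel measures, which is a slightly more self-contained version of the same step.
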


\begin{proof}
Since $d^{(i)}(\mu_\beta^*,\nu)\to 0$, we have
\[
W_1\big((o^{(i)})_\#\mu_\beta^*,(o^{(i)})_\#\nu\big)\to 0
\]
for each $i$. Because $W_1$ metrizes weak convergence on probability measures over Polish spaces with finite first moment, it follows that
\[
(o^{(i)})_\#\mu_\beta^* \Rightarrow (o^{(i)})_\#\nu.
\]
By continuity of pushforward under weak convergence, any weak limit point $\bar{\mu}$ therefore satisfies
$(o^{(i)})_\#\bar{\mu}=(o^{(i)})_\#\nu$ for all $i$, completing the proof.
\end{proof}

\end{document}